\def\eqref#1{equation~\ref{#1}}
\def\1{\bm{1}}
\DeclareMathAlphabet{\mathsfit}{\encodingdefault}{\sfdefault}{m}{sl}
\SetMathAlphabet{\mathsfit}{bold}{\encodingdefault}{\sfdefault}{bx}{n}
\def\gD{{\mathcal{D}}}
\def\gO{{\mathcal{O}}}
\newtheorem{assumption}{Assumption}
\newtheorem{theorem}{Theorem}
\newtheorem{lemma}[theorem]{Lemma}
\definecolor{green}{RGB}{0,128,0}
\definecolor{red}{RGB}{255,0,0}
\theoremstyle{remark}
\newtheorem{remark}{Remark}
\title{GUARD: \underline{G}uided \underline{U}nlearning \underline{a}nd \underline{R}etention via \underline{D}ata Attribution for Large Language Models}
\author{Peizhi Niu \& Evelyn Ma \thanks{Equally Contribution} \\
Department of Electrical \& Computer Engineering\\
University of Illinois Urbana-Champaign\\
\texttt{\{peizhin2,pingm\}@illinois.edu} \\
\And
Huiting Zhou \& Duo Zhou \thanks{Equally Contribution}\\
Department of Electrical \& Computer Engineering \\
University of Illinois Urbana-Champaign \\
\texttt{\{hzhou53,duozhou2\}@illinois.edu} \\
\AND
Huan Zhang \& S. Rasoul Etesami \\
Department of Electrical \& Computer Engineering \\
University of Illinois Urbana-Champaign \\
\texttt{\{huanz,etesami1\}@illinois.edu} \\
\AND
Olgica Milenkovic\\
Department of Electrical \& Computer Engineering \\
University of Illinois Urbana-Champaign \\
\texttt{milenkov@illinois.edu} \\
}
\begin{document}

\maketitle

\begin{abstract}

Unlearning in large language models (LLMs) is becoming increasingly important due to regulatory compliance, copyright protection, and privacy concerns. However, a key challenge in LLM unlearning is \textit{unintended forgetting}, where the removal of specific data inadvertently impairs the utility of the model and its retention of valuable, desired information. While prior work has primarily focused on architectural innovations, the influence of data-level factors on unlearning performance remains underexplored. As a result, existing methods often suffer from degraded retention when forgetting high-impact data.
To address this problem, we propose GUARD — a novel framework for \underline{G}uided \underline{U}nlearning \underline{A}nd \underline{R}etention via \underline{D}ata attribution. At its core, GUARD introduces a lightweight proxy data attribution metric tailored for LLM unlearning, which quantifies the ``alignment'' between the Forget and Retain sets while remaining computationally efficient. Building on this, we design a novel unlearning objective that assigns \emph{adaptive, nonuniform unlearning weights to samples,} inversely proportional to their proxy attribution scores. Through such a reallocation of unlearning power, GUARD mitigates unintended retention loss. We also provide rigorous theoretical guarantees that GUARD significantly improves retention while maintaining forgetting metrics comparable to prior methods. Extensive experiments on the TOFU and MUSE benchmarks across multiple LLM architectures demonstrate that GUARD reduces utility sacrifice on the TOFU Retain Set by up to 194.92\% in terms of Truth Ratio when forgetting 10\% of the training data, and improves knowledge retention on the MUSE NEWS Retain Set by 16.20\%, with comparable or very moderate increases in privacy loss compared to state-of-the-art methods.
\end{abstract}
\section{Introduction}

Although LLMs have demonstrated remarkable domain transferability and achieved state-of-the-art performance across a wide range of natural language processing tasks~\cite{ouyang2022training, wei2022finetuned, touvron2023llama, wu2023investigating, liang2022holistic}, their deployment has raised serious privacy and security concerns. LLMs are typically trained on massive corpora of data that may contain sensitive, private, or proprietary information, which increases the risk of unintended data leakage and misuse~\cite{li2021ethical, shi2023ethical, li2024ethical, yang2023ethical} (for example, LLMs were shown to memorize and regenerate specific pieces of sensitive content, such as Social Security Numbers (SSNs)~\cite{carlini2021memorization, huang2022privacy}). These challenges have prompted a growing interest in developing mechanisms for the selective removal — or \textit{unlearning} — of specific information from LLMs.

Machine Unlearning in LLMs provides a practical alternative to avoid model retraining when removing specific information, in compliance with privacy and regulatory requirements~\cite{yao2024large, kumar2022machine, chen2023machine, maini2024machine, liu2024survey}. Despite many advances in the domains of computational efficiency and key forget parameter identification~\cite{yao2023llmunlearning,maini2024ga,zhang2024npo, li2024rmu,shi2024regularization, choi2024prompt, wang2024llm,jia2024wagle, niu2025dmolscheduledrivendiffusionmodel, wei2025llmsreallyforgetevaluating}, fine-tuning-based LLM unlearning methods frequently suffer from \textit{excessive forgetting} and \textit{retention degradation}~\cite{zhang2024overunlearning}, compromising the utility of the retained model knowledge. Similarly, inference-time LLM unlearning techniques~\cite{pawelczyk2023instruction, huang2024logit,thaker2024unlearning,liu2024large, ji2024reversing}, which avoid model updates and are hence computationally attractive, do not guarantee actual forgetting, as sensitive information remains embedded within the unchanged model parameters (for related works, see Appendix \ref{subsec:apdx-rw-unlearn}).

More broadly, the current limitations of LLM unlearning methods include one or more of the following:
\emph{a) degraded retention}, which refers to the fact that unlearning can severely degrade performance on the retained data~\cite{zhang2024overunlearning, zhang2024npo, li2024rmu}; 
\emph{b) computational inefficiency,} which arises when requiring auxiliary LLMs for guidance or evaluation~\cite{ji2024reversing}; 
\emph{c) privacy leakage}, which results in failure to guarantee acceptable removal of sensitive information~\cite{liu2024large,pawelczyk2023instruction,huang2024logit}; 
and, 
\emph{d) lack of theoretical performance guarantees.} 
To address the above problems, we propose a novel unlearning method that ensures \textit{increased utility, effective forgetting, improved computational efficiency, and formal theoretical guarantees for both retention and forgetting.}

\textbf{Increased utility and effective forgetting.} Existing unlearning methods use unprincipled allocation of model unlearning weights across forget samples. For example, prior fine-tuning-based approaches typically construct the unlearning objective by averaging per-sample losses with retention-unrelated weights that are often uniform or determined solely by the forget sample itself, without considering the influence of the sample on model performance over the retained set. Neglecting the attributional impact of a sample leads to unintentional forgetting of useful knowledge and consequent utility loss: For example, unlearning a training instance from a particular class can disproportionately degrade performance on that same class when compared to unlearning unrelated instances~\cite{koh2017understanding}. Our solution involves a novel data-attribution-driven unlearning framework termed \textit{GUARD} (\textbf{G}uided \textbf{U}nlearning \textbf{A}nd \textbf{R}etention via \textbf{D}ata attribution), which explicitly incorporates data attribution metrics into the unlearning objective.

GUARD dynamically adjusts unlearning weights based on the influence of each forget sample on the performance of the model over the retained data. More precisely, GUARD allocates unlearning weights in such a way that the data-wise loss weight is inversely proportional to the sample-wise contribution to retention, reducing the likelihood of unintentionally degrading of retained knowledge. 
To ensure effective forgetting, we also introduce a temperature-controlled reverse unification mechanism that regulates the variance of unlearning weights across samples. 
This principled reallocation strategy enables GUARD to both mitigate retention degradation by down-weighting high-retention-impact samples, and to maintain effective forgetting by carefully calibrating the degree of adjustment.

\textbf{Improved computational efficiency.} Since GUARD has to perform efficient data attribution on large LLM models, one has to redesign existing methods. Conventional data attribution methods estimate the influence of individual training samples on overall model performance by comparing model outputs with and without the inclusion of each sample~\cite{koh2017understanding,IPE2022,PGI2023}. However, these approaches typically require model retraining or Hessian-based approximations, making them computationally prohibitive for LLMs. Moreover, evaluating the influence of samples based on global performance contradicts the unlearning objective which aims to \textit{decrease} the performance on the Forget Set and simultaneously  \textit{preserve} or \textit{improve} it on the Retain Set, rather than optimize performance across the whole dataset (see Appendix \ref{subsec:apdx-rw-da}). \emph{Our solution to this problem is a novel attribution scores tailored to LLM unlearning.} Inspired by gradient-based sample-wise representations~\cite{nguyen2025minibatchcoresetsmemoryefficientlanguage,wang2025rethinking}, we define the \emph{retention attribution score} of a forget sample as the inner product between its gradient and the gradient averaged over all retain samples. This measure captures the alignment between the knowledge encoded by a forget sample and that of the retain distribution. It also avoids the computational burden of retraining or Hessian estimation inherent to prior attribution techniques. Furthermore, by explicitly separating forget and retain knowledge before computing inter-sample influence, the proposed score effectively quantifies the contribution of a sample to retention utility and facilitates retention-aware unlearning. 

\textbf{Theoretical Guarantees}.
We provide a rigorous theoretical analysis demonstrating that, compared to conventional unlearning baselines such as Gradient Ascent~\cite{yao2023llmunlearning}, \textsc{GUARD} offers the following key benefits: (1) it effectively reduces the loss on the Retain Set, leading to enhanced \textit{absolute retention}; (2) it maintains comparable degradation on the Forget Set, ensuring strong \textit{absolute forgetting}; and (3) it significantly reduces the retention-performance tradeoff, improving \textit{relative retention efficiency}, i.e., retention preserved per unit of forgetting incurred.

\textbf{Empirical Validation.}
To empirically validate our claims, we conduct extensive experiments on the TOFU~\cite{maini2024tofu} and MUSE~\cite{shi2024musemachineunlearningsixway} benchmarks, two standardized evaluation suites designed specifically for LLM unlearning. Across a range of LLM architectures and evaluation protocols, \textsc{GUARD} consistently and substantially outperforms existing methods both with respect to absolute and relative retention while it maintains comparable levels of forgetting. Notably, GUARD reduces utility degradation on TOFU's Retain Set by up to 194.92\% in terms of the Truth Ratio when forgetting 10\% of the training data. Also, on the MUSE NEWS Retain Set, GUARD enhances knowledge retention by 16.20\%. 

\textbf{Summary of technical contributions.}
We propose GUARD, a novel retention-aware unlearning framework for large language models (LLMs) with the following unique features:
\vspace{-0.7em}
\begin{itemize}
\item \textit{Retention-aware objective reweighting.}
GUARD chooses unlearning weights inversely proportionally to the retention attribution score of each sample, with a controlled variance that allows for preserving effective forgetting. This mitigates the utility degradation seen in retention-unaware methods.
\vspace{-0.3em}  
\item \textit{Attribution tailored to LLM unlearning.} GUARD uses a gradient-based retention attribution score that quantifies the ``alignment'' of a forget sample with retained knowledge which is tailored to unlearning without requiring retraining or computing of the Hessian.
\vspace{-0.3em}  
\item \textit{Theoretical Guarantees.}  
We theoretically show that our attribution score lower-bounds conventional influence measures. Additionally, we prove that GUARD improves absolute retention, preserves effective forgetting, and significantly boosts retention efficiency over standard baselines such as Gradient Ascent.
\vspace{-0.3em}  
\item \textit{Extensive Empirical Validation.}  We conduct comprehensive experiments on the TOFU and MUSE benchmarks, covering four non-loss reweighting baselines (GA, GD, KM, PO), their GUARD-augmented counterparts (GA+GUARD, GD+GUARD, KM+GUARD, PO+GUARD), four loss reweighting baselines (NPO, SimNPO, FLAT, SimImP), as well as NPO+GUARD (we selected NPO for integration with GUARD as it demonstrated high stability and strong performance). For all these settings, GUARD consistently enhanced retention while maintaining forgetting performance comparable to baselines, demonstrating its effectiveness as a general framework applicable to diverse unlearning systems.
\end{itemize}

\section{LLM Unlearning Preliminaries}

\textbf{Data -- The Forget and Retain Sets.} We define a datapoint as $(x,y)$, where $x \in \mathbb{R}^p$ represents the input of the LLM, while $y \in \mathbb{R}^q$ denotes the corresponding ground-truth label or response. The whole dataset used for fine-tuning an LLM is denoted as $D_0 = \{(x_i,y_i)\}_{i \in [n_0]}$, where $n_0$ is the total number of datapoints. The Forget Set, denoted by $D_f$, consists of datapoints requested for removal, while the Retain Set, denoted by $D_r$, contains datapoints that should be kept. The index sets of these datapoints are defined as $I_f := \{i \mid (x_i, y_i) \in D_f\}$ and $I_r := \{j \mid (x_j, y_j) \in D_r\}$. We also let $n_f:= |I_f|$ and $n_r := |I_r|$ represent the respective cardinalities of the sets. Clearly, $n_f + n_r = n_0$ and $D_0 = D_f \cup D_r$.

\textbf{The empirical loss.} Let $\Theta$ be the parameter space, and let an LLM be parameterized by $\theta \in \Theta$. Given a loss function $l: \mathbb{R}^{p \times q} \mapsto \mathbb{R}$, the loss incurred on a datapoint $(x,y)$ equals $l(x,y;\theta)$. For a dataset $D$, the empirical loss objective is given by:
\vspace{-0.3em}
\begin{align*}
    {L}_{D}(\theta) &= \frac{ 1}{|D|} \sum_{(x_i, y_i) \in D}l(x_i,y_i;\theta).
\end{align*}
\vspace{-1.5em}

\textbf{Standard LLM unlearning.} \emph{Standard LLM unlearning} refers to the process of adapting an LLM initially trained on $D_0$ in order to remove the influence of a designated Forget Set $D_f$ (see Fig.~\ref{fig:llm-unlearn-pipeline}). One typical approach is \textit{Gradient Ascent} (GA), which updates the model parameters $\theta_0$ as follows:
    \begin{align}
        \theta_{GA} &= \theta_0 + \eta \cdot \nabla_{\theta} {L}_{D_f}(\theta_0)  \notag\\
        &= \theta_0 + \eta \cdot \sum_{i \in I_f} \frac{1}{n_f}\cdot \nabla_{\theta} l(x_i,y_i;\theta_0),\, \text{ where $\eta$ denotes the unlearning rate.}
        \label{eq:prelim-GA}
    \end{align}
    \vspace{-2em}
Unlearning mechanisms other than Gradient Ascent are discussed in Appendix~\ref{subsec:apdx-unlearn-update}.\\

\section{{Method: GUARD}}

\label{sec:GUARD}
\begin{figure}[ht]
    \centering
    \includegraphics[width=0.7\linewidth]{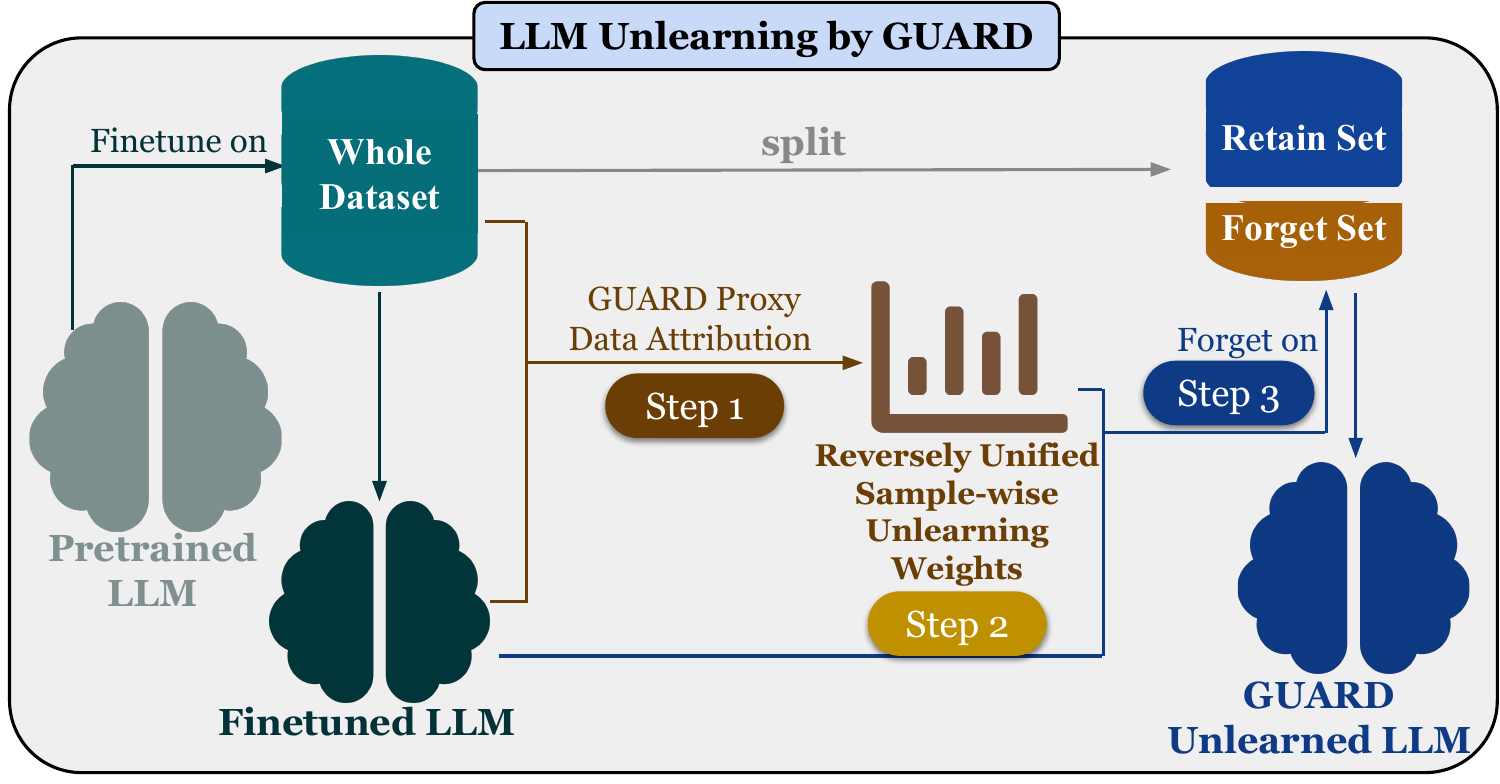}
    \vspace{-0.3em}
    \caption{The GUARD pipeline: Step 1) Calculation of proxy data attribution. Step 2) Calculation of unlearning weights based on data attribution, followed by reverse unification of the scores. Step 3) Incorporation of the computed retention-aware weights for reduction of the influence of the Forget Set and preservation of desired knowledge.}
    \label{fig:GUARD-pipeline}
    \vspace{-1em} 
\end{figure}

\subsection{GUARD Pipeline}
\label{subsec:method-pipeline}

Unlike traditional methods that assign retention-unaware unlearning weights across all data points (e.g., uniform weights as in GA, $\frac{1}{n_f}$; see Eq. \ref{eq:prelim-GA}), GUARD introduces a novel objective function that controllably leverages sample loss weights that are inversely correlated to  data attribution scores, so as to model retention (Section \ref{subsec:method-algo}). Additionally, GUARD uses a new data attribution scheme specifically tailored to the LLM unlearning setting ({Section~\ref{subsec:method-attribution}}). The GUARD pipeline is outlined in Fig. \ref{fig:GUARD-pipeline}, and it comprises the following steps:
\vspace{-0.5em}
\begin{itemize}

\item[{Step 1}] \textit{Computing the data attribution scores for retention:} Estimating proxy data attribution scores for all datapoints in the Forget Set, as described in Section~\ref{subsec:method-attribution}.
\item[{Step 2}] \textit{Deriving retention-aware unlearning weights:} Calculating unlearning weights that are inversely related to the computed attribution scores, with the degree of adjustment modulated by a temperature parameter~$\tau$ (Section~\ref{subsec:method-weight}).
\item[{Step 3}] \textit{Performing GUARD unlearning}: Optimizing the GUARD objective using the retention-aware unlearning weights, as detailed in Section~\ref{subsec:method-algo}.
\end{itemize}

\subsection{Data Attribution for LLM Unlearning}
\label{subsec:method-attribution}

Conventional data attribution methods estimate the influence of individual training samples by measuring the change in the model outputs when a sample is removed from the training set. These approaches typically rely on retraining or Hessian-based computations (see Appendix~\ref{subsec:apdx-rw-da}). However, such methods are impractical in the context of LLM unlearning for two key reasons: (1) their computational cost is prohibitive at the operational scale of LLMs; and (2) the objective of unlearning differs fundamentally from standard attribution, i.e., effective unlearning necessitates different performance ``shifts,'' namely, degraded performance on the Forget Set $\gD_f$ and improved performance on the Retain Set $\gD_r$ (rather than a uniform unidirectional performance change over the entire dataset). To address these challenges, we propose a proxy attribution score $a_i^{\text{GUARD}}$ for a Forget Sample $i$, defined as an inner product of gradients:
\begin{align}
\label{eq:def-J-prod}
a_i^{\text{GUARD}} := J^{\gD_r}_{avg}(\theta_0)^\top \cdot J_i(\theta_0),\quad \forall i\in I_f,
\end{align}
\vspace{-2em}

where $J^{\gD_r}_{avg}(\theta_0)$ denotes the mean gradient over $\gD_r$ and $J_i(\theta_0)$ is the gradient of sample $i$, both evaluated for the initial model parameters $\theta_0$. This formulation captures the alignment between the information content of the Forget and the Retain Set. Representing sample-level influence via gradients is a well-established practice, particularly in LLMs~\cite{nguyen2025minibatchcoresetsmemoryefficientlanguage,wang2025rethinking}.
In addition to being intuitive and computationally efficient, our proxy attribution score lends itself to theoretical analysis, as described in Appendix~\ref{sec:apdx-proxy-da}.

\subsection{Retention-Aware Unlearning Weights}
\label{subsec:method-weight}

A large retention attribution score for a Forget Sample \(i\) indicates that the sample \((x_i, y_i)\) has a strong positive influence on the performance of the model over the Retain Set \(\gD_r\) (e.g., its removal leads to a large performance drop on \(\gD_r\)). We allocate unlearning weights to data samples inversely proportionally to their attribution scores; at the same time, we modulate the variance of the weight distribution by a temperature-controlled softmax.
The unlearning weight for a Forget Sample \(i\) equals
\begin{align}
    \omega^{\text{GUARD}}_i := n_f \cdot \frac{e^{-{a}^{\text{GUARD}}_i/\tau}}{\sum_{j\in I_f} e^{-{a}^{\text{GUARD}}_j/\tau}}, \quad i \in I_f,
    \label{eq:guard-multiplier}
\end{align}  
where as before, \({a}^{\text{GUARD}}_i\) is the attribution score defined in Eq.~\ref{eq:def-J-prod}, \(\tau\) is the  temperature hyperparameter, and \(n_f = |I_f|\) denotes the size of the Forget Set. The unlearning weights \(\omega^{\text{GUARD}}_i\) satisfy the following properties:

\textit{1. Normalization Property,} which asserts that $
        \frac{1}{n_f} \sum_{i\in I_f} \omega^{\text{GUARD}}_i = 1.
    $
  
\textit{2. Inverse Attribution Score Property,}  
    which asserts that unlearning weights should be inversely proportional to the attribution scores of the samples; this mitigates unintended loss of knowledge in \(\gD_r\) during unlearning of \(\gD_f\).

\textit{3. Controlled Variance Property,}  
    which asserts that the degree of weight differentiation is governed by a temperature parameter \(\tau\) to enable effective forgetting on \(\gD_f\). Comparable forgetting to baselines is guaranteed theoretically (Section~\ref{sec:theorem}) and validated empirically (Section~\ref{sec:exp}).

\subsection{Unlearning with GUARD}
\label{subsec:method-algo}

For a given forget dataset $D_f$ and a model parameterized by $\theta$, our novel GUARD unlearning loss objective, $L^{\text{GUARD}}: \mathbb{R}^{p \times q} \to \mathbb{R}$, is defined as
\begin{align*}
    L^{\text{GUARD}}_{D_f}(\theta) := \frac{1}{n_f}\sum_{i\in I_f} \omega_i^{\text{GUARD}} \cdot l(x_i, y_i; \theta),
\end{align*}
where $\omega_i$ is the unlearning weight of datapoint $(x_i, y_i)$ from Section \ref{subsec:method-weight}. Using the GA framework in (Eq. \ref{eq:prelim-GA}) as an example, the model update rule under GUARD becomes:
\begin{align}
\label{eq:GA-guard-update}
    \theta^{\text{GUARD}}_{\text{GA}} 
    &= \theta_{0} +  \frac{\eta}{n_f} \sum_{i \in I_f}  \omega_i^{\text{GUARD}} \cdot\nabla_{\theta}\left[  l(x_i, y_i; \theta_0)\right]. 
\end{align}
As discussed in Section~\ref{subsec:method-weight}, such an unlearning weight allocation (Eq. \ref{eq:GA-guard-update}) mitigates unintended forgetting of retained knowledge while it also ensures effective forgetting of undesired information (GUARD objectives for methods other than Gradient Ascent are described in Appendix~\ref{subsec:apdx-guard-obj} and Appendix~\ref{subsec:apdx-guard-algo}).

\textbf{The GUARD algorithm.} The descriptions in Sections~\ref{subsec:method-pipeline}–\ref{subsec:method-algo} are summarized in Algorithm~\ref{alg:GUARD}, which outlines the implementation of GUARD within the GA unlearning framework (Eq.\ref{eq:prelim-GA}). We adopt the standard unlearning protocol with a single forget epoch~\cite{yao2024machine}, although the algorithm naturally generalizes to multiple unlearning epochs as needed.

\begin{algorithm}[H]
    \caption{GUARD: Guided Unlearning and Retention via Data Attribution}
    \label{alg:GUARD}
    \begin{algorithmic}[1]
        \State \textbf{Input}: Initial model weights $\theta_0$; unlearning rate $\eta$; temperature $\tau$.
        \State Compute retention attribution ${a}^{\text{GUARD}}_i$ for all $i \in I_f$ using Eq.~\ref{eq:def-J-prod} 
        \label{line:IF}
        \Comment{Prepare Attribution}
        \State Compute unlearning weights $\omega_i$ for all $i \in I_f$ using Eq.~\ref{eq:guard-multiplier} \Comment{Prepare Sample-wise Weights}
        \label{line:multiplier}
        \State Update model $\theta_0$ to $\theta_{\text{GA}}^{\text{GUARD}}$ by Eq.~\ref{eq:GA-guard-update}
        \label{line:update}
        \Comment{Update Model}
        \State \textbf{Output}: Unlearned model weights $\theta_{\text{GA}}^{\text{GUARD}}$.
    \end{algorithmic}
\end{algorithm}


\section{Theoretical Guarantees}
\label{sec:theorem}

Let $\psi \in \{0, \text{GA}, \text{GUARD}\}$ denote the superscripts corresponding to models trained without unlearning, trained with standard/baseline unlearning (GA), and GUARD, respectively. For a model parameterized by $\theta^\psi$, we define the empirical loss over the Forget Set $\gD_f$ and the Retain Set $\gD_r$ as:
\begin{align*}
    L^\psi_{\gD_f} = \frac{1}{n_f} \sum_{i \in I_f} l(x_i, y_i; \theta^\psi), \;\,
    L^\psi_{\gD_r} = \frac{1}{n_r} \sum_{i \in I_r} l(x_i, y_i; \theta^\psi),\; \text{ where } n_f = |I_f|, \, n_r = |I_r|.
\end{align*}
\vspace{-1.5em}

Let $\overline{g}_f := \frac{1}{n_f} \sum_{i \in I_f} g_i$ and $\overline{g}_r := \frac{1}{n_r} \sum_{i \in I_r} g_i$ denote the average gradients over the Forget and Retain Sets, respectively. We denote the global alignment between the two sets as $\kappa := \langle \overline{g}_f, \overline{g}_r \rangle$ and introduce a normalized alignment metric $\delta_{\kappa} := \frac{\kappa}{\tau}$, where $\tau$ is the temperature hyperparameter in the GUARD multiplier. Denote the maximum of the norm of model updates upon unlearning as $\delta_\theta := \max_{\psi\in \{\text{GA}, \text{GUARD}\}} \|\theta^\psi - \theta^0\|_2$, and define $\delta := \delta_\kappa + \delta_\theta$. For each Forget Sample index $j \in I_f$, we have $\kappa_j := \langle \overline{g}_r, g_j \rangle$, whose variance equals
\[
    \sigma^2_\kappa := \frac{1}{n_f} \sum_{j \in I_f} \kappa_j^2 - \left( \frac{1}{n_f} \sum_{j \in I_f} \kappa_j \right)^2.
\]
\textbf{Evaluation metric}. We define the \textit{Sacrifice Rate} $\rho^\psi$ to quantify the performance loss on the Retain Set per ``unit'' of {performance loss} 
on the Forget Set,
\begin{align*}
    \rho^\psi := \frac{L^\psi_{\gD_r} - L^0_{\gD_r}}{L^\psi_{\gD_f} - L^0_{\gD_f}}.
\end{align*}
A smaller value of $\rho^\psi$ is indicative of better knowledge retention.

\begin{assumption}[Knowledge entanglement between the Forget and Retain Sets]
\label{asum:r-f-entangle}
Assume that the following inequalities hold: (1) $\langle \overline{g}_r, \overline{g}_f \rangle > 0$; (2) $\langle \overline{g}_r, g_i \rangle / \tau \ll 1$, $\forall i \in I_f$.
\end{assumption}
Next, note that gradients can be seen as proxies for information flow~\cite{nguyen2025minibatchcoresetsmemoryefficientlanguage,wang2025rethinking}. Consequently, condition (1) implies a global alignment between the Forget and Retain Sets, implying that their entanglement—unlearning $\gD_f$ typically harms the performance on $\gD_r$. Condition (2) assumes weak alignment between each Forget Sample and the Retain Set, enabling selective unlearning without significantly compromising the retained knowledge. The two inequalities jointly yield to $0 < \delta_{\kappa} = \kappa/\tau \ll 1$.
\begin{assumption}[Small updates]
\label{asum:small-update}
    We assume that the second and higher powers of $\delta_\theta$, the maximal norm of model weight changes upon unlearning, are negligible (and therefore omitted in the analysis).
\end{assumption}
The above assumption is commonly used in influence-function-based data attribution~\cite{IPE2022}, and is appropriate whenever unlearning introduces small perturbations in the model weights.
\begin{assumption}[Isotropic gradients]
\label{asum:iso-grad}
Assume the gradients in $I_f$ are approximately isotropic. Formally, let \( \Sigma_f := \frac{1}{n_f} \sum_{j \in I_f} g_j g_j^\top \) denote the empirical second-moment matrix of the gradients of \( I_f \). Then, \( \Sigma_f \approx \lambda I \) for some scalar \( \lambda > 0 \).
\end{assumption}
Similar isotropy assumptions were adopted in prior data attribution works for approximating gradient covariance matrices in high-dimensional neural networks~\cite{PGI2023}.

We show next that GUARD significantly reduces the Sacrifice Rate (Theorem \ref{thrm:SR-GUARD-GA}) by preserving performance on the Retain Set (Lemma~\ref{lemma:r-loss-GUARD-GA}) while effectively unlearning the Forget Set (Lemma~\ref{lemma:f-loss-GUARD-GA}).

\begin{lemma}[Retain loss reduction by GUARD]
\label{lemma:r-loss-GUARD-GA}
Under Assumptions~\ref{asum:r-f-entangle}-\ref{asum:iso-grad}, the loss on the Retain Set under GUARD is lower than that under GA, i.e.,
\[
    L_{\gD_r}^{\text{GA}} - L_{\gD_r}^{\text{GUARD}} = \frac{\eta}{(1 - \delta_{\kappa})\tau} \cdot \sigma^2_\kappa + \mathcal{O}(\delta^2),\text{ where } \eta \text{ denotes the unlearning rate. }
\]
\vspace{-1.5em}
\end{lemma}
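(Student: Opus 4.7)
The plan is to first-order Taylor expand the retain-set loss $L_{\gD_r}(\theta)$ around the fine-tuned weights $\theta_0$ for each of the two unlearned models, then substitute the explicit update rules and subtract. Under Assumption~\ref{asum:small-update}, since $\|\theta^{\psi} - \theta_0\|_2 \le \delta_\theta$ and second-order terms are negligible, I would write
\[
L^{\psi}_{\gD_r} - L^{0}_{\gD_r} = \overline{g}_r^\top (\theta^{\psi} - \theta_0) + \mathcal{O}(\delta_\theta^2), \quad \psi\in\{\text{GA},\text{GUARD}\}.
\]
Plugging in $\theta^{\text{GA}} - \theta_0 = \eta\,\overline{g}_f$ from Eq.~\ref{eq:prelim-GA} gives $L^{\text{GA}}_{\gD_r} - L^{0}_{\gD_r} = \eta\,\kappa + \mathcal{O}(\delta_\theta^2)$, while Eq.~\ref{eq:GA-guard-update} yields $L^{\text{GUARD}}_{\gD_r} - L^{0}_{\gD_r} = \tfrac{\eta}{n_f}\sum_{i\in I_f}\omega^{\text{GUARD}}_i\,\kappa_i + \mathcal{O}(\delta_\theta^2)$, using that $\overline{g}_r^\top g_i = \kappa_i$.

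\paragraph{Linearizing the GUARD weights.}
Next I would exploit Assumption~\ref{asum:r-f-entangle}(2), which gives $\kappa_i/\tau \ll 1$ for every $i\in I_f$, to Taylor expand the softmax in Eq.~\ref{eq:guard-multiplier}:
\[
e^{-\kappa_i/\tau} = 1 - \kappa_i/\tau + \mathcal{O}(\delta_\kappa^2), \qquad \sum_{j\in I_f} e^{-\kappa_j/\tau} = n_f(1-\delta_\kappa) + \mathcal{O}(\delta_\kappa^2),
\]
since $\tfrac{1}{n_f}\sum_j \kappa_j = \kappa$ and $\delta_\kappa = \kappa/\tau$. Dividing, the normalized weights collapse to the compact form $\omega^{\text{GUARD}}_i = \frac{1 - \kappa_i/\tau}{1-\delta_\kappa} + \mathcal{O}(\delta_\kappa^2)$.

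\paragraph{Assembling the difference.}
Substituting this back and taking the difference, I would obtain
\[
L^{\text{GA}}_{\gD_r} - L^{\text{GUARD}}_{\gD_r} = \eta\left[\kappa - \frac{1}{1-\delta_\kappa}\left(\kappa - \frac{1}{n_f\tau}\sum_{i\in I_f}\kappa_i^2\right)\right] + \mathcal{O}(\delta^2).
\]
Combining the two terms in the bracket over the common denominator $1-\delta_\kappa$ produces $\frac{1}{n_f\tau}\sum_i \kappa_i^2 - \kappa\,\delta_\kappa = \frac{1}{\tau}\bigl(\tfrac{1}{n_f}\sum_i \kappa_i^2 - \kappa^2\bigr)$, since $\kappa\delta_\kappa = \kappa^2/\tau$. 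Recognizing the parenthesized expression as exactly $\sigma^2_\kappa$ (because $\kappa = \tfrac{1}{n_f}\sum_i \kappa_i$) yields the stated identity.

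\paragraph{Anticipated obstacle.}
The main bookkeeping hurdle is to consistently track the two distinct sources of error --- the $\mathcal{O}(\delta_\theta^2)$ from the first-order loss expansion and the $\mathcal{O}(\delta_\kappa^2)$ from linearizing the softmax normalizer --- and verify that both collapse into the stated $\mathcal{O}(\delta^2)$ remainder under $\delta = \delta_\kappa + \delta_\theta$. A subtle point is that the denominator $1-\delta_\kappa$ must be kept \emph{unexpanded}, since the leading non-vanishing term is already first order in $\delta_\kappa$ (via $\kappa\delta_\kappa$) and expanding the denominator would move legitimate first-order contributions into the remainder. Assumption~\ref{asum:iso-grad} is not required for this lemma but will be invoked when propagating the estimate into the forget-set bound of Lemma~\ref{lemma:f-loss-GUARD-GA}.
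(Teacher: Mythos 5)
Your proposal is correct and follows essentially the same route as the paper's proof: a first-order Taylor expansion of the retain loss around $\theta_0$, linearization of the softmax weights via $e^{-\kappa_i/\tau}\approx 1-\kappa_i/\tau$ under Assumption~\ref{asum:r-f-entangle}, and the same algebraic cancellation producing $\frac{\eta}{(1-\delta_\kappa)\tau}\sigma^2_\kappa$. Your side observations --- keeping the $1-\delta_\kappa$ denominator unexpanded and noting that Assumption~\ref{asum:iso-grad} is only needed later for Lemma~\ref{lemma:f-loss-GUARD-GA} --- are both consistent with the paper's argument.
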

The proof is deferred to Appendix~\ref{sec:apdx-proof-sr}.
Lemma~\ref{lemma:r-loss-GUARD-GA} shows that the improvement offered by GUARD is proportional to the alignment variance $\sigma^2_\kappa$ and that it scales with $\frac{\eta}{(1 - \delta_{\kappa})\tau}$. A large variance suggests strong interactions between the Forget Samples and the Retain Set, which GUARD leverages to reduce the loss. Retention improves with increasing the unlearning rate $\eta$ and decreasing the temperature $\tau$, as long as $\delta_{\kappa}:= \kappa/\tau \ll 1$.
\begin{lemma}[The forget loss of GUARD is comparable to that of GA]
\label{lemma:f-loss-GUARD-GA}
Under Assumptions~\ref{asum:r-f-entangle} and~\ref{asum:small-update}, the difference in forget loss between GUARD and GA is bounded by
\[
    \left| L_{\gD_f}^{\text{GA}} - L_{\gD_f}^{\text{GUARD}} \right| = \delta_{\kappa} \eta \cdot \| \overline{g}_f \|_2^2 + \mathcal{O}(\delta^2).
\]
\end{lemma}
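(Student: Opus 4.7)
The plan is to perform a first-order Taylor expansion of both $L^{\text{GA}}_{\gD_f}$ and $L^{\text{GUARD}}_{\gD_f}$ about $\theta_0$. By Assumption~\ref{asum:small-update}, the quadratic residuals are $O(\delta_\theta^2) \subseteq O(\delta^2)$, and since $\nabla_\theta L_{\gD_f}(\theta_0) = \overline{g}_f$, the forget-loss gap reduces at leading order to the directional form $\overline{g}_f^\top(\theta^{\text{GA}} - \theta^{\text{GUARD}}) + O(\delta^2)$. This converts the question into bounding an inner product between $\overline{g}_f$ and the difference of the two parameter updates.

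Next, I would substitute the two update rules (Eq.~\ref{eq:prelim-GA} for GA and Eq.~\ref{eq:GA-guard-update} for GUARD). Setting $p_i := \omega_i^{\text{GUARD}}/n_f$, which are softmax probabilities summing to one by the Normalization Property, the update difference becomes $\theta^{\text{GA}} - \theta^{\text{GUARD}} = \eta \sum_{i \in I_f}\bigl(1/n_f - p_i\bigr)\,g_i$. Using Assumption~\ref{asum:r-f-entangle}(2), I would Taylor-expand $e^{-\kappa_i/\tau} = 1 - \kappa_i/\tau + O((\kappa_i/\tau)^2)$ in each softmax entry, and use the identity $\tfrac{1}{n_f}\sum_j \kappa_j = \kappa$ to obtain $p_i \approx (1-\kappa_i/\tau)/[n_f(1-\delta_\kappa)]$ and hence $1/n_f - p_i \approx (\kappa_i/\tau - \delta_\kappa)/[n_f(1-\delta_\kappa)]$. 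Contracting with $\overline{g}_f$, writing $\beta_i := \langle g_i, \overline{g}_f\rangle$ and using $\tfrac{1}{n_f}\sum_i \beta_i = \|\overline{g}_f\|_2^2$, the forget-loss gap collapses, at leading order, to $\frac{\eta}{1-\delta_\kappa}\left[\frac{1}{n_f\tau}\sum_{i\in I_f}\kappa_i\beta_i \;-\; \delta_\kappa\|\overline{g}_f\|_2^2\right]$.

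The main obstacle will be controlling the cross term $\frac{1}{n_f\tau}\sum_{i\in I_f}\kappa_i\beta_i$. My approach is to invoke Assumption~\ref{asum:r-f-entangle}(2) a second time to uniformly bound each factor $|\kappa_i|/\tau$ by a quantity of order $\delta_\kappa$, so that the whole bilinear sum is of order $\delta_\kappa$ times a quantity no larger than $\|\overline{g}_f\|_2^2$ (using Cauchy--Schwarz on the $\beta_i$'s together with $\tfrac{1}{n_f}\sum_i\beta_i=\|\overline{g}_f\|_2^2$). Combined with the explicit $-\delta_\kappa\|\overline{g}_f\|_2^2$ contribution and the geometric-series expansion $1/(1-\delta_\kappa) = 1 + O(\delta_\kappa)$, every surviving higher-order contribution in $\delta_\kappa$ or $\delta_\theta$ folds into $O(\delta^2)$. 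Taking absolute values then produces the stated identity $\bigl|L^{\text{GA}}_{\gD_f} - L^{\text{GUARD}}_{\gD_f}\bigr| = \delta_\kappa \eta \|\overline{g}_f\|_2^2 + O(\delta^2)$, which I interpret as a statement about the leading-order scaling of the forget-loss discrepancy.
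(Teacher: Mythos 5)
Your setup is exactly the paper's: a first-order Taylor expansion in $\theta$ (justified by Assumption~\ref{asum:small-update}), substitution of the two update rules, linearization of the softmax via $e^{-\kappa_i/\tau}\approx 1-\kappa_i/\tau$, and the resulting identity
\[
L^{\text{GA}}_{\gD_f}-L^{\text{GUARD}}_{\gD_f}
=\frac{\eta}{1-\delta_\kappa}\Bigl[\tfrac{1}{n_f\tau}\sum_{i\in I_f}\kappa_i\beta_i-\delta_\kappa\|\overline{g}_f\|_2^2\Bigr]+\mathcal{O}(\delta^2),
\qquad \beta_i:=\langle g_i,\overline{g}_f\rangle,
\]
which agrees with the paper's intermediate expression (Eq.~\ref{eq:dlf_GUARD_1}). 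The gap is in your last step. The cross term $\tfrac{1}{n_f\tau}\sum_i\kappa_i\beta_i$ is \emph{first order} in $\delta_\kappa$ — it is $\mathcal{O}(\delta_\kappa)$ times a gradient-scale quantity, i.e.\ of exactly the same order as the $\delta_\kappa\|\overline{g}_f\|_2^2$ term you are trying to isolate — so it cannot be folded into $\mathcal{O}(\delta^2)$. If you only bound it by something of order $\delta_\kappa\|\overline{g}_f\|_2^2$, the best you can conclude is $|L^{\text{GA}}_{\gD_f}-L^{\text{GUARD}}_{\gD_f}|=(1\pm c)\,\delta_\kappa\eta\|\overline{g}_f\|_2^2+\mathcal{O}(\delta^2)$ for an unknown constant $c$, not the stated equality. (Your Cauchy--Schwarz step is also backwards: $\sqrt{\tfrac{1}{n_f}\sum_i\beta_i^2}\ge|\tfrac{1}{n_f}\sum_i\beta_i|=\|\overline{g}_f\|_2^2$, so the bilinear sum is not controlled by $\|\overline{g}_f\|_2^2$ from above.)

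The paper kills this term with Assumption~\ref{asum:iso-grad} (isotropic gradients), which your proof never invokes: writing $\sum_{j}\kappa_j\beta_j=\overline{g}_r^\top\bigl(\sum_j g_jg_j^\top\bigr)\overline{g}_f=n_f\,\overline{g}_r^\top\Sigma_f\overline{g}_f\approx n_f\lambda\kappa$ with $\lambda=\tfrac{1}{d}(\|\overline{g}_f\|_2^2+\mathrm{Var}_g)$, the cross term becomes $\delta_\kappa\cdot d^{-1}(\|\overline{g}_f\|_2^2+\mathrm{Var}_g)$, suppressed by the (very large) gradient dimension $d$ relative to $\delta_\kappa\|\overline{g}_f\|_2^2$, and only then does the gap reduce to $\delta_\kappa\eta\|\overline{g}_f\|_2^2+\mathcal{O}(\delta^2)$. (To be fair, the lemma's statement lists only Assumptions~\ref{asum:r-f-entangle} and~\ref{asum:small-update}, but the paper's own proof does use the isotropy assumption at Eq.~\ref{eq:dlf_GUARD_2}; without it, or some equivalent decorrelation between $\{\kappa_i\}$ and $\{\beta_i\}$, the leading coefficient is not $\delta_\kappa\|\overline{g}_f\|_2^2$.) To repair your proof, you need to add this structural step rather than an order-counting argument.
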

The proof is deferred to Appendix~\ref{sec:apdx-proof-sr}.
Lemma~\ref{lemma:f-loss-GUARD-GA} confirms that GUARD remains effective at unlearning, with a forget loss comparable to that of GA. The gap is small, controlled by $\delta_{\kappa} \ll 1$, and can be further reduced by a smaller squared averaged forget gradient norm $\| \overline{g}_f \|_2^2$ and unlearning rate $\eta$.

\begin{theorem}[Sacrifice rate reduction by GUARD]
\label{thrm:SR-GUARD-GA}
Under Assumptions~\ref{asum:r-f-entangle}-\ref{asum:iso-grad}, GUARD can be shown to reduce the Sacrifice Rate relative to GA as
\[
    \rho^{\text{GA}} - \rho^{\text{GUARD}} = \frac{\kappa^2 + \sigma^2_\kappa}{\tau \cdot \| \overline{g}_f \|_2^2} + \mathcal{O}(\delta^2).
\]
\end{theorem}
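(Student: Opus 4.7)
}
The plan is to reduce the sacrifice-rate difference to a ratio of quantities that are already controlled by Lemmas~\ref{lemma:r-loss-GUARD-GA} and~\ref{lemma:f-loss-GUARD-GA}, together with first-order Taylor expansions of the losses at $\theta_0$. Let $a := L^{\text{GA}}_{\gD_r} - L^0_{\gD_r}$, $b := L^{\text{GA}}_{\gD_f} - L^0_{\gD_f}$, $a' := L^{\text{GUARD}}_{\gD_r} - L^0_{\gD_r}$, and $b' := L^{\text{GUARD}}_{\gD_f} - L^0_{\gD_f}$, so that $\rho^{\text{GA}} - \rho^{\text{GUARD}} = a/b - a'/b'$. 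The standard identity
\[
\frac{a}{b} - \frac{a'}{b'} \;=\; \frac{a - a'}{b'} \;+\; \frac{a\,(b' - b)}{b\,b'}
\]
turns the problem into evaluating the four scalars $a, b, a'-a, b'-b$ modulo $\mathcal{O}(\delta^2)$.

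First, I would compute $a$ and $b$ by Taylor-expanding the retain and forget losses around $\theta_0$ along the GA update $\theta^{\text{GA}} - \theta_0 = \eta\,\overline{g}_f$. Under Assumption~\ref{asum:small-update}, quadratic and higher terms are absorbed into $\mathcal{O}(\delta_\theta^2) \subseteq \mathcal{O}(\delta^2)$, giving
\[
b \;=\; \eta\,\|\overline{g}_f\|_2^2 + \mathcal{O}(\delta^2), \qquad a \;=\; \eta\,\langle \overline{g}_r, \overline{g}_f\rangle + \mathcal{O}(\delta^2) \;=\; \eta\,\kappa + \mathcal{O}(\delta^2).
\]
Second, Lemma~\ref{lemma:r-loss-GUARD-GA} directly yields
\[
a - a' \;=\; L^{\text{GA}}_{\gD_r} - L^{\text{GUARD}}_{\gD_r} \;=\; \frac{\eta\,\sigma_\kappa^2}{(1-\delta_\kappa)\tau} + \mathcal{O}(\delta^2) \;=\; \frac{\eta\,\sigma_\kappa^2}{\tau} + \mathcal{O}(\delta^2),
\]
where the last equality uses $(1-\delta_\kappa)^{-1} = 1+\mathcal{O}(\delta_\kappa)$ and Assumption~\ref{asum:r-f-entangle}. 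Third, for $b'-b$, I would redo the argument of Lemma~\ref{lemma:f-loss-GUARD-GA} just enough to pin down the sign (not provided by the absolute value in Lemma~\ref{lemma:f-loss-GUARD-GA} itself): expanding the softmax weights $\omega_i^{\text{GUARD}}$ to first order in $\kappa_i/\tau$ gives a GUARD update direction $\tilde g = \overline{g}_f + (\kappa/\tau)\overline{g}_f - (1/\tau)\,\Sigma_f\,\overline{g}_r$, and the isotropy assumption $\Sigma_f \approx \lambda I$ with $\lambda = \mathcal{O}(\|\overline{g}_f\|_2^2/d)$ makes the $\Sigma_f \overline{g}_r$ term $\mathcal{O}(\delta^2)$ relative to the $\delta_\kappa\|\overline{g}_f\|_2^2$ term, yielding $b'-b = \delta_\kappa\,\eta\,\|\overline{g}_f\|_2^2 + \mathcal{O}(\delta^2)$ with a definite sign.

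Fourth, I plug these into the identity. The first piece gives $(a-a')/b' = \sigma_\kappa^2/(\tau\|\overline{g}_f\|_2^2) + \mathcal{O}(\delta^2)$, since $b' = b + \mathcal{O}(\delta)\cdot b$. The second piece gives
\[
\frac{a\,(b'-b)}{b\,b'} \;=\; \frac{\eta\kappa \cdot \delta_\kappa\,\eta\,\|\overline{g}_f\|_2^2}{(\eta\,\|\overline{g}_f\|_2^2)^2} + \mathcal{O}(\delta^2) \;=\; \frac{\delta_\kappa\,\kappa}{\|\overline{g}_f\|_2^2} + \mathcal{O}(\delta^2) \;=\; \frac{\kappa^2}{\tau\,\|\overline{g}_f\|_2^2} + \mathcal{O}(\delta^2),
\]
using $\delta_\kappa = \kappa/\tau$. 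Summing the two pieces recovers $(\kappa^2 + \sigma_\kappa^2)/(\tau\,\|\overline{g}_f\|_2^2) + \mathcal{O}(\delta^2)$ as claimed.

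The main obstacle is the bookkeeping in the third step: Lemma~\ref{lemma:f-loss-GUARD-GA} only controls $|b'-b|$, but the theorem uses its sign, so one must open up the derivation to confirm that $b' > b$ to first order and that the cross term proportional to $\Sigma_f\overline{g}_r$ is genuinely higher order under isotropy. A secondary, purely bookkeeping issue is ensuring that when dividing by $b$ and $b'$ in the ratio identity, the resulting relative error stays at $\mathcal{O}(\delta^2)$; this reduces to checking that $b$ does not itself shrink with $\delta$, which follows from Assumption~\ref{asum:r-f-entangle} and the expansion $b = \eta\,\|\overline{g}_f\|_2^2 + \mathcal{O}(\delta^2)$.
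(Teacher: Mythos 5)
Your proposal is correct and follows essentially the same route as the paper's proof: both rest on the same four first-order expansions ($L^{\text{GA}}_{\gD_r}-L^{0}_{\gD_r}=\eta\kappa$, $L^{\text{GA}}_{\gD_f}-L^{0}_{\gD_f}=\eta\|\overline{g}_f\|_2^2$, and their GUARD counterparts, all modulo $\mathcal{O}(\delta^2)$), the only difference being that the paper computes $\rho^{\text{GA}}$ and $\rho^{\text{GUARD}}$ separately and subtracts, whereas you package the subtraction via the identity $a/b - a'/b' = (a-a')/b' + a(b'-b)/(bb')$, which is a cosmetic reorganization of the same algebra. You also correctly flag the one genuine subtlety---that the sign of $L^{\text{GUARD}}_{\gD_f}-L^{\text{GA}}_{\gD_f}$ must be read off from the derivation of Lemma~\ref{lemma:f-loss-GUARD-GA} rather than from its absolute-value statement---and the paper's proof indeed establishes $L^{\text{GUARD}}_{\gD_f}-L^{0}_{\gD_f}=\eta(1+\delta_\kappa)\|\overline{g}_f\|_2^2+\mathcal{O}(\delta^2)$, confirming the positive sign you need.
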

\vspace{-0.8em}
The proof is deferred to Appendix~\ref{sec:apdx-proof-sr}. The reduction is large when 1) the global alignment $\kappa$ is large; 2) the variance $\sigma^2_\kappa$ is large; 3) the average forget gradient norm is small; and 4) the temperature $\tau$ is small (provided that $\delta_{\kappa}:= \kappa/\tau \ll 1$). 

Using Lemma~\ref{lemma:r-loss-GUARD-GA}, Lemma~\ref{lemma:f-loss-GUARD-GA}, and Theorem~\ref{thrm:SR-GUARD-GA}, in Appendix~\ref{sec:apdx-thrm-factor} we provide an in-depth analysis of how GUARD unlearning is influenced by both tunable hyperparameters (the unlearning rate $\eta$ and temperature $\tau$) and intrinsic characteristics of the data/model (knowledge alignment $\kappa$, alignment variance $\sigma_\kappa^2$, and average forget gradient norm $\|\overline{g}_f\|$).

\section{Experiments}

\label{sec:exp}


\textbf{The dataset.} We evaluate GUARD on \textbf{TOFU}~\cite{maini2024tofu} and \textbf{MUSE}~\cite{shi2024musemachineunlearningsixway} benchmarks for LLM unlearning~\cite{jia2024wagle, ji2024reversing}. TOFU comprises $4,000$ Q-A pairs from $200$ synthetic author profiles. The data is split into: 
a) \textit{Forget/Retain Sets}, sampled to unlearn $1\%$, $5\%$, or $10\%$ of the data; 
b) \textit{Generalization Sets} (\textit{Real-Author}, \textit{Real-World-Facts}) to evaluate downstream retention and generalization. MUSE provides two real-world corpora: 
a) \textit{NEWS (BBC articles)} with $0.8$M/$1.6$M token Forget/Retain Sets;
b) \textit{BOOKS} with Harry Potter books as a $1.1$M token Forget Set, and FanWiki as $0.5$M token Retain Set. Each corpus includes verbatim text and knowledge sets with question-answer pairs derived from the original content.

\textbf{The backbone LLM.} Following the original setups of the benchmarks, we evaluate unlearning performance using \textit{Phi-1.5B}~\cite{li2023textbooks} and \textit{Llama-2-7B}~\cite{touvron2023llama} on TOFU, \textit{Llama-2-7B} on MUSE NEWS corpus, and \textit{ICLM-7B}~\cite{shi2024incontextpretraininglanguagemodeling} on MUSE BOOKS corpus.

\textbf{Baselines.} We adopt four non-loss reweighting unlearning baselines provided by TOFU and implement GUARD on them. These include: \textit{Gradient Ascent (GA)}~\cite{thudi2022unrolling}, \textit{Gradient Difference (GD)} \cite{liu2022continual}, \textit{KL Minimization (KM)}~\cite{chundawat2023zero}, and \textit{Preference Optimization (PO)}~\cite{maini2024ga}. Detailed reviews of these baselines and corresponding GUARD algorithms are available in Appendix~\ref{subsec:apdx-unlearn-update} and~\ref{subsec:apdx-guard-algo}. It is worth noting that since MUSE is a textual dataset and PO is only applicable to QA-Pair datasets, we do not report PO results for MUSE. Additionally, we include results for four recent loss reweighting methods: 
\textit{Negative Preference Optimization (NPO)}~\cite{zhang2024negative}, \textit{SimNPO}~\cite{fan2025simplicityprevailsrethinkingnegative}, \textit{FLAT}~\cite{wang2024llmunlearninglossadjustment} (we use the TV variant as f-divergence required by FLAT since it achieves the best empirical performance) and \textit{SatImp}~\cite{yang2025exploringcriterialossreweighting}. For fairness of comparisons, we \emph{apply GUARD to NPO} and compare it against the four loss reweighting methods. We chose NPO as the base method due to its demonstrated stability and excellent performance. However, it is important to note that GUARD can be adapted with equal ease to all other loss reweighting methods to provide further improvements if needed. Details regarding the baselines are provided in Appendix~\ref{sec:apdx-rw}.


\textbf{Evaluation metrics.} We use tailor-made metrics for TOFU and MUSE benchmarks due to their distinct evaluation criteria.

TOFU Evaluations: To measure forgetting, retention, and the trade-off between the two, we evaluate the Absolute Performance and Sacrifice Rate.\\
(a) \textit{Absolute Performance}. Given a dataset $D$ and model $\theta$, we use the Absolute Performance assessment $\epsilon(D; \theta)$ from the TOFU benchmark (e.g., ROUGE-L, Probability, Truth Ratio). A lower value of $\epsilon(\gD_f;\theta)$ indicates better forgetting, while a higher value of $\epsilon(\gD_r;\theta)$ indicates better retention.\\ 
(b) \textit{Sacrifice Rate.} We introduce the \textit{Sacrifice Rate}, denoted as $\rho$, and defined as
    \begin{equation}
    \rho_{\epsilon}(D) = \frac{\epsilon(D_r;\theta^0)-\epsilon(D_r;\theta)}{\epsilon(D_f;\theta^0)-\epsilon(D_f;\theta)}.
    \end{equation}
This metric measures the relative performance degradation on the desired sets per unit of performance decrease on the Forget Set. The lower $\rho_\epsilon$, the better relative knowledge retention.

MUSE Evaluations: Following the original MUSE framework, we evaluate methods according to four key criteria: (a) \textit{Forget Set VerbMem.} This criteria measures exact text reproduction from the Forget Set. Lower values indicate better unlearning. (b) \textit{Forget Set KnowMem.} This metric assesses the ability of the model to answer questions about the Forget Set. Lower values indicate better unlearning. (c) \textit{PrivLeak.} This metric quantifies membership inference vulnerability. Values within [-5\%, 5\%] indicate successful unlearning, while values outside this range suggest over/under-unlearning. (d) \textit{Retain Set KnowMem.} This criteria is used to evaluate knowledge retention on the Retain Set. Higher values of the score indicate better utility preservation.


\textbf{Configuration and hyperparameters.} For details, refer to Appendix~\ref{subsec:config}.

\textbf{Results.} We evaluate the effectiveness of \textsc{GUARD} by focusing on the LLaMA-2-7B backbone, and defer the results for Phi-1.5B to Appendix~\ref{subsec:apdx-exp-phi}, as both models exhibit similar behaviors. We report on experimental results for both TOFU (Sacrifice Rates in Table~\ref{tab:llama-forget10} and Absolute Performance metrics in Table~\ref{tab:abso-llama-05}) and MUSE (four criteria in Table~\ref{tab:unlearning_muse_new}). Due to space limitations, in the main text we focus on results for 10\% unlearning on TOFU as this is the most challenging scenario, while results for 1\% and 5\% data unlearning are delegated to Appendix~\ref{subsec:apdx-exp-llama}. For the same reason, we only present results for MUSE NEWS and delegate the MUSE BOOKS dataset analyses to Appendix~\ref{subsec:apdx-exp-muse-books}.

\begin{table}[ht]
\centering
\caption{Unlearning experiments with LLaMA-2-7B on 10\% of all data points in TOFU. Across all baselines, \textsc{GUARD} markedly mitigates retention degradation, with the most substantial improvement observed on the Retain Set where it reduces $\rho_t$ by up to 194.92\% relative to GA.
}
\vspace{-0.3em}
\label{tab:llama-forget10}
\resizebox{0.85\textwidth}{!}{
\begin{tabular}{lccccccccc}
\toprule
\multirow{2}{*}{\textbf{Methods}} 
& \multicolumn{3}{c}{\textbf{Retain Set($\downarrow$)}} 
& \multicolumn{3}{c}{\textbf{Real Author Set($\downarrow$)}} 
& \multicolumn{3}{c}{\textbf{Real World Set($\downarrow$)}} \\
\cmidrule(lr){2-4} \cmidrule(lr){5-7} \cmidrule(lr){8-10}
& $\rho_{r}$ & $\rho_{p}$ & $\rho_{t}$ 
& $\rho_{r}$ & $\rho_{p}$ & $\rho_{t}$ 
& $\rho_{r}$ & $\rho_{p}$ & $\rho_{t}$ \\
\midrule
GA 
& 102.57 & 124.08 & 421.13 
& 80.47 & 111.00 & 371.88 
& 87.06 & 79.70 & 246.03 \\
GA + GUARD 
& \textbf{68.46} & \textbf{84.76} & \textbf{226.21} 
& \textbf{64.67} & \textbf{54.82} & \textbf{181.30} 
& \textbf{73.90} & \textbf{37.31} & \textbf{157.97} \\
\midrule
KM 
& 101.19 & 112.03 & 410.22 
& 79.48 & 55.66 & 271.93 
& 86.14 & 62.60 & 126.89 \\
KM + GUARD 
& \textbf{67.63} & \textbf{80.05} & \textbf{218.88} 
& \textbf{63.96} & \textbf{36.16} & \textbf{117.02} 
& \textbf{75.75} & \textbf{34.60} & \textbf{81.21} \\
\midrule
PO 
& 89.88 & 104.47 & 221.31 
& 44.00 & 45.73 & 171.55 
& 8.38 & 37.60 & 28.27 \\
PO + GUARD 
& \textbf{62.73} & \textbf{79.62} & \textbf{124.23} 
& \textbf{29.04} & \textbf{30.39} & \textbf{126.27} 
& \textbf{-7.20} & \textbf{23.49} & \textbf{-7.21} \\
\midrule
GD 
& 86.37 & 94.12 & 117.88 
& 38.49 & 15.74 & 60.67 
& -3.05 & 8.77 & -20.38 \\
GD + GUARD 
& \textbf{59.94} & \textbf{76.89} & \textbf{106.81} 
& \textbf{25.91} & \textbf{3.28} & \textbf{13.24} 
& \textbf{-12.53} & \textbf{-2.27} & \textbf{-22.83} \\
\midrule
NPO 
& 85.58 & 106.24 & 201.33 
& 42.91 & 42.47 & 101.43 
& 8.11 & 33.24 & 20.32 \\
SimNPO 
& 76.12 & 91.22 & 151.51 
& 36.11 & 32.94 & 52.32
& 5.83 & 13.51 & 8.42 \\
FLAT (TV) 
& 73.11 & 85.46 & 158.63 
& 29.32 & 11.27 & 41.33
& 5.30 & 6.26 & -12.97 \\
SatImp 
& 71.93 & 83.11 & 131.75 
& 31.08 & 15.94 & 53.28
& 8.64 & 8.85 & -5.62 \\
NPO + GUARD 
& \textbf{61.12} & \textbf{78.39} & \textbf{118.47} 
& \textbf{23.88} & \textbf{4.24} & \textbf{11.62} 
& \textbf{-7.87} & \textbf{-4.55} & \textbf{-13.69} \\
\bottomrule
\end{tabular}}

\end{table}

\begin{table}[t]
\centering
\caption{Impact of \textsc{GUARD} on Absolute Performance evaluated over the Retain and Forget Sets. Arrows ($\uparrow$/$\downarrow$) indicate whether higher/lower values are preferred. \textsc{GUARD} consistently improves performance on the Retain Set and maintains performance on the Forget Set relative to baselines.
}
\vspace{-0.3em}
\label{tab:abso-llama-05}
\resizebox{0.7\textwidth}{!}{
\begin{tabular}{lcccccc}
\toprule
\multirow{2}{*}{\textbf{Methods}} 
& \multicolumn{3}{c}{\textbf{Retain Set} ($\uparrow$)} 
& \multicolumn{3}{c}{\textbf{Forget Set} ($\downarrow$)} \\
\cmidrule(lr){2-4} \cmidrule(lr){5-7}
& ROUGE-L & Prob. & T. Ratio 
& ROUGE-L & Prob. & T. Ratio \\
\midrule
w/o Unlearn 
& 99.35 & 84.36 & 98.20 
& 98.99 & 84.08 & 96.27 \\
\midrule
GA 
& 48.81 & 41.97 & 19.32 
& \textbf{43.18} & {48.35} & {76.60} \\
GA + GUARD 
& \textbf{63.63} & \textbf{52.83} & \textbf{51.40} 
& 43.44 & \textbf{48.23} & \textbf{76.55} \\
\midrule
KM 
& 51.05 & 44.62 & 26.21 
& {42.91} & {48.76} & {74.92} \\
KM + GUARD 
& \textbf{65.36} & \textbf{53.12} & \textbf{53.89} 
& \textbf{42.90} & \textbf{48.09} & \textbf{74.33} \\
\midrule
PO 
& {68.87} & {46.25} & {39.83} 
& {35.44} & {47.49} & {67.17} \\
PO + GUARD 
& \textbf{83.22} & \textbf{54.11} & \textbf{63.09} 
& \textbf{35.05} & \textbf{47.37} & \textbf{66.73} \\
\midrule
GD 
& {72.50} & {52.25} & {61.16} 
& \textbf{33.45} & \textbf{43.68} & {63.85} \\
GD + GUARD 
& \textbf{87.15} & \textbf{60.47} & \textbf{70.20} 
& 33.76 & 43.80 & \textbf{63.78} \\
\midrule
NPO
& {70.01} & {48.30} & {55.26} 
& {32.12} & {42.71} & {63.89} \\
SimNPO
& {73.12} & {53.40} & {60.28} 
& {32.97} & \textbf{42.06} & {62.58} \\
FLAT (TV)
& {68.72} & {50.21} & {58.93} 
& {37.44} & {46.56} & {66.91} \\
SatImp
& {68.60} & {49.51} & {60.36} 
& \textbf{31.38} & {42.25} & {\textbf{62.22}} \\
NPO + GUARD 
& \textbf{86.25} & \textbf{60.54} & \textbf{71.94} 
& 31.79 & 42.57 & {63.26} \\
\bottomrule
\end{tabular}}
\vspace{-1em}
\end{table}

\textbf{Effectiveness of \textsc{GUARD} on TOFU.} As shown in Table~\ref{tab:llama-forget10}, \textsc{GUARD} consistently offers lower Sacrifice Rates across all datasets and baselines, indicating improved retention of unforgotten knowledge. In particular, \textsc{GUARD} reduces the Sacrifice Rate by up to $194.92\%$ in Truth Ratio on the Retain Set when unlearning 10\% of training data. From Table~\ref{tab:abso-llama-05}, we see that \textsc{GUARD} also significantly improves Absolute Performance on the Retain Set while maintaining comparable performance to baselines on the Forget Set, demonstrating its ability to enhance retention without compromising forgetting. Notably, \textsc{GUARD} increases the Truth Ratio by up to $32.08\%$ over the GA baseline on the Retain Set while maintaining similar degradation levels on the Forget Set as the standard method.

Additionally, \textsc{GUARD} applied on NPO significantly outperforms NPO, SimNPO, FLAT (TV) and SatImp across all three evaluation sets. Specifically, NPO + \textsc{GUARD} achieves substantially lower Sacrifice Rates than SatImp, with reductions of up to $19.18\%$ in Truth Ratio on the Retain Set. Also, NPO + \textsc{GUARD} increases the Truth Ratio by up to 32.08\% over SatImp on the Retain Set while retaining similar degradation levels on the Forget Set. This demonstrates that our attribution-driven approach provides superior knowledge retention compared to existing loss reweighting strategies.

\textbf{Effectiveness of \textsc{GUARD} on MUSE.} Based on Table~\ref{tab:unlearning_muse_new}, \textsc{GUARD} demonstrates similar effectiveness on MUSE as observed on TOFU. \textsc{GUARD} consistently improves retention performance across all baselines while maintaining effective forgetting, with NPO + \textsc{GUARD} achieving the best overall performance on Retain Set KnowMem ($45.9$) compared to all competing loss reweighting methods. We note that \textsc{GUARD} exhibits somewhat higher PrivLeak scores, which reflects the fact that retention of relevant knowledge naturally makes the model more vulnerable to membership inference attacks. Moreover, this moderate privacy degradation (10.5\% increase for NPO + \textsc{GUARD}) is substantially outweighed by the significant utility gains in knowledge retention (a 27.8\% improvement on the Retain Set KnowMem) and highly improved forgetting performance (a 22.8\% reduction on Forget Set KnowMem), demonstrating that the retention benefits far exceed the privacy loss.

\textbf{Additional results.} 1) \textbf{Computational efficiency analysis} (Appendix~\ref{subsec:time}): We present a runtime comparisons demonstrating that \textsc{GUARD} introduces minimal computational overhead, with data attribution representing a one-time preprocessing step that amortizes across multiple unlearning operations. 2) \textbf{Visualization of proxy attribution and reallocated unlearning power} (Appendix~\ref{subsec:apdx-exp-da}): We provide visualizations that confirm the intuition that \textsc{GUARD} reallocates unlearning power based on the proxy attribution scores to mitigate retention losses.
3) \textbf{Hyperparameter sensitivity analysis} (Appendix~\ref{subsec:apdx-exp-hs}): We analyze the sensitivity of the temperature parameter $\tau,$ supporting our claim that tuning $\tau$ controls the quality of retention.

\begin{table}[htbp]
\centering
\caption{Unlearning experiments with LLaMA-2-7B on MUSE NEWS dataset.}
\label{tab:unlearning_muse_new}
\resizebox{0.9\textwidth}{!}{
\begin{tabular}{lcccc}
\toprule

\multirow{3}{*}{\textbf{Methods}} & \textbf{Forget Set VerbMem} & \textbf{Forget Set KnowMem} & \textbf{PrivLeak} & \textbf{Retain Set KnowMem} \\
& \textbf{($\downarrow$)} & \textbf{($\downarrow$)} & \textbf{($\in [-5\%, 5\%]$)} & \textbf{($\uparrow$)} \\
\cmidrule(lr){2-5}
\multicolumn{5}{c}{\textbf{NEWS}} \\
\midrule
w/o Unlearn & 58.4 & 63.9 & $-99.8$ & 55.2 \\
\midrule
GA & 0.0 & 0.0 & \textbf{5.8} & 0.0 \\
GA + GUARD & \textbf{0.0} & \textbf{0.0} & 6.5 & \textbf{8.4} \\
\midrule
KM & \textbf{28.1} & 50.3 & $\mathbf{-95.4}$ & 43.5 \\
KM + GUARD & 28.5 & \textbf{49.7} & $-101.2$ & \textbf{50.1} \\
\midrule
GD & 4.4 & 32.4 & \textbf{101.7} & 27.6 \\
GD + GUARD & \textbf{2.62} & \textbf{25.9} & 106.8 & \textbf{45.1} \\
\midrule
NPO & 2.7 & 56.8 & 87.7 & 35.9 \\
SimNPO & 2.2 & 45.1 & 88.9 & 38.6 \\
FLAT (TV) & 2.3 & \textbf{42.6} & \textbf{81.2} & 30.3 \\
SimImP & 2.5 & 46.0 & 92.4 & 39.5 \\
NPO + GUARD & \textbf{2.2} & 43.8 & 96.9 & \textbf{45.9} \\
\bottomrule
\end{tabular}}
\end{table}



\vspace{-0.3em}

\section{Conclusion}
\vspace{-0.3em}
We present \textsc{GUARD}, a guided unlearning framework that leverages data attribution to mitigate unintended forgetting in LLM unlearning. By adapting unlearning strength based on data influence, GUARD preserves essential knowledge while removing target data. Our theoretical analysis and experiments demonstrate substantial improvements over existing baselines across LLM architectures.


\newpage

\section*{Ethics Statement}

The authors acknowledge adherence to the ICLR Code of Ethics and recognize that this work addresses legitimate privacy concerns by developing methods to selectively remove sensitive information from large language models in support of data protection regulations. We honestly report that current unlearning methods, including GUARD, may not provide absolute guarantees of information removal, and practitioners should carefully evaluate models for unintended bias or fairness implications after unlearning procedures. All experiments were conducted using established benchmarks with proper attribution and transparent reporting of results and limitations. This research contributes to making AI systems more controllable and privacy-preserving, though we recognize that responsible deployment requires interdisciplinary collaboration with legal, policy, and ethics experts. The authors commit to sharing code and experimental details to support reproducible research while being mindful of potential misuse scenarios.

\section*{Reproducibility Statement}

To ensure reproducibility of our results, we have made comprehensive efforts to document all experimental details and follow established evaluation protocols. Our experiments strictly adhere to the evaluation frameworks and protocols specified in the original TOFU~\citep{maini2024tofu} and MUSE~\citep{shi2024musemachineunlearningsixway} benchmark papers, and we encourage researchers to follow the guidance provided in those publications for proper reproduction of baseline comparisons. All hyperparameter settings, model configurations, and experimental setup details for the GUARD method are provided in Appendix~\ref{subsec:config}, including learning rates, batch sizes, temperature parameters, and hardware specifications used across different model architectures. The algorithmic implementation of GUARD is detailed in Algorithm~\ref{alg:GUARD} and Algorithm~\ref{alg:GUARD-extra}, with mathematical formulations provided in Section~\ref{sec:GUARD} and theoretical derivations included in Appendix~\ref{sec:apdx-proof-sr}. Our evaluation metrics and experimental protocols for both absolute performance and sacrifice rate measurements are comprehensively described in Section~\ref{sec:exp}, with additional results across different data splits and model architectures provided in Appendices~\ref{subsec:apdx-exp-llama} through \ref{subsec:apdx-exp-muse-books}. We commit to releasing our implementation code and detailed experimental scripts to facilitate exact reproduction of our reported results.
\newpage
\bibliography{iclr2026/main}
\bibliographystyle{iclr2026_conference}

\appendix
\newpage
\appendix
\onecolumn

\textbf{\Large Appendix}

\tableofcontents

\addtocontents{toc}{\protect\setcounter{tocdepth}{2}}
\section{LLM Usage Statement}
We used a large language model (LLM) \textit{Claude Sonet 4} for language polishing of parts of the manuscript (phrasing, grammar, and clarity). The LLM was only used to improve presentation; it did not contribute to the technical content, experimental design, data analysis, or conclusions. All substantive scientific claims, results, and interpretations are the authors' own, and the authors take full responsibility for the final text. Any LLM-generated text was reviewed, edited, and approved by the authors.

\section{Related Works}
\label{sec:apdx-rw}

\subsection{Unlearning for LLM}
\label{subsec:apdx-rw-unlearn}
Existing LLM Unlearning approaches broadly fall into two categories: Fine-tuning-based and inference-time-based methods. Fine-tuning approaches such as Gradient Ascent (GA)~\cite{yao2023llmunlearning}, Preference Optimization~\cite{maini2024ga} and variants thereof, modify model weights by adjusting training objectives, typically through loss maximization on the forget set~\cite{zhang2024npo, li2024rmu} and incorporation of regularization terms based on KL divergence, retained data consistency, or prompt-guided responses~\cite{shi2024regularization, choi2024prompt, wang2024llm}. Recent approaches such as WAGLE~\cite{jia2024wagle} improve computational efficiency by isolating the parameters most relevant to forgetting. However, fine-tuning-based methods frequently suffer from \textit{excessive forgetting} and \textit{retention degradation}~\cite{zhang2024overunlearning}, compromising the utility of the retained model knowledge.
A complementary line of work explores inference-time unlearning~\cite{pawelczyk2023instruction, huang2024logit, thaker2024unlearning}, which avoids model updates by modifying prompts or logits at inference to suppress specific outputs~\cite{liu2024large, ji2024reversing}. While computationally attractive, these methods do not guarantee actual forgetting, as sensitive information remains embedded in the unchanged model parameters. Emerging directions~\cite{yao2024large, zhang2024catastrophic} also investigate properties beyond forgetting and retention, such as robustness of unlearning mechanisms and recovery from catastrophic forgetting. 

\textbf{Fine-Tuning-Based Methods.} 
Many unlearning techniques modify model weights through loss-based fine-tuning strategies, such as GA~\cite{yao2023llmunlearning}, preference optimization (PO)~\cite{maini2024ga}, negative preference optimization (NPO)~\cite{zhang2024npo}, and representation misdirection~\cite{li2024rmu}. These methods typically maximize the loss on the forget set while incorporating regularization objectives such as KL divergence minimization, joint training with retain data~\cite{shi2024regularization}, or prompt-based desirable loss surrogates~\cite{choi2024prompt}. Despite their effectiveness in forgetting, these methods often degrade performance on retained knowledge due to excessive forgetting~\cite{zhang2024overunlearning}. WAGLE~\cite{jia2024wagle} focuses on improving computation efficiency by isolating critical parameters for forgetting via weight attribution, whereas~\cite{wang2024llm} adds a regularization term to penalize low-quality generations, yet both empirically fail to preserve retention. Other fine-tuning efforts include~\cite{eldan2023openunlearning, patil2024efficient, jia2024unlearn}, but none achieve robust retention preservation.

\paragraph{Input-Based and In-Context Methods.}
Input-level methods operate at inference time by manipulating inputs to suppress undesired outputs without altering model weights. These include prompt filtering~\cite{liu2024large}, instruction-based tuning~\cite{pawelczyk2023instruction}, and logit-based interventions~\cite{huang2024logit, ji2024reversing}. ECO Prompts~\cite{liu2024large} use prompt classifiers and zeroth-order optimization to induce forgetting, while Unlearning from Logit Difference (ULD)~\cite{ji2024reversing} employs an assistant LLM to reduce side effects. Although computationally efficient, these methods rely on access to sensitive data at inference time, which limits their privacy guarantees~\cite{thaker2024unlearning, liu2024survey}.

\paragraph{Alternative Directions.}
Recent efforts explore complementary directions beyond weight-based or input-level interventions. Multimodal unlearning~\cite{li2024single}, adversarial robustness~\cite{schwinn2024soft}, and standardized benchmarks~\cite{yao2024large, wang2025towards} enhance evaluation protocols but do not directly address retention. FLAT~\cite{wang2024llmunlearninglossadjustment} eliminates the need for Retain Data by maximizing the f-divergence between template and forget responses, while Yang et al.~\cite{yang2025exploringcriterialossreweighting} analyze loss reweighting mechanisms through saturation and importance-based categorizations. Negative Preference Optimization (NPO)~\cite{zhang2024negative} treats Forget Data as negative responses in preference optimization, providing bounded objectives with adaptive gradient smoothing to prevent catastrophic collapse during unlearning. SimNPO~\cite{fan2025simplicityprevailsrethinkingnegative} further addresses reference model bias in NPO by adopting a reference-free, length-normalized approach that more effectively allocates optimization power across forget samples with varying difficulty levels. Utility-guided methods, such as gradient-based influence modeling~\cite{wang2025rethinking}, propose retention-aware metrics without offering actionable strategies. Other approaches like quantization-based recovery~\cite{zhang2024catastrophic} attempt to restore forgotten knowledge post hoc but remain orthogonal to the goal of preserving retention during unlearning.

\subsection{Data Attribution}
\label{subsec:apdx-rw-da}
Data attribution in machine learning aims to quantify the influence of individual data points on model behavior, supporting tasks such as debugging, data cleaning, and interpretability. Classical approaches, such as influence functions, estimate the effect of removing a single data point on model parameters or predictions. \citet{koh2017understanding} adapted influence functions to deep learning, illustrating their value in identifying mislabeled data and interpreting model decisions. However, subsequent studies revealed notable limitations. For example, ~\citet{basu2021influence} showed that influence estimates are highly sensitive to hyperparameters and model architecture, particularly in deep networks. \citet{han2022sanity} further found that these methods can fail basic sanity checks, attributing high influence to unrelated data. \citet{bae2022if} also demonstrated that influence functions may estimate unintended quantities in deep neural networks, raising concerns about their reliability in complex models.

To address the limitations of classical influence functions, recent work has introduced more robust and scalable data attribution techniques. Among these, direct estimators such as regression-based datamodels have demonstrated improved predictive accuracy and generalization, particularly in large-scale settings where computational resources are sufficient~\cite{IPE2022}. Two state-of-the-art approaches---TRAK~\cite{IPE2022} and EK-FAC~\cite{PGI2023}---build upon this paradigm and offer practical solutions across diverse modalities. TRAK approximates a deep neural network with a linear surrogate model and estimates influence by ensembling over multiple model checkpoints and random projections. EK-FAC improves efficiency through a combination of strategies: partitioning the Hessian into block-diagonal components, expressing gradients as Kronecker products of smaller factors, and applying spectral regularization to stabilize curvature estimates.

Despite their empirical success, these methods remain computationally prohibitive for large-scale models such as large language models (LLMs). Moreover, their underlying attribution philosophy is not directly aligned with the goals of machine unlearning, which demands targeted forgetting while preserving non-targeted knowledge.

\section{Theoretical soundness of our data attribution tailored for LLM Unlearning}
\label{sec:apdx-proxy-da}

\subsection{Preliminiaries: conventional data attribution}
\label{subsec:apdx-DA}
\paragraph{Leave-One-Out Framework.} Data attribution quantifies the impact of a specific training datapoint on the model's loss under the Leave-One-Out framework~\cite{rad2018scalable}. Given a dataset $D$, the influence of a datapoint $(x_j, y_j)$ on the model's optimal loss is defined as:
\begin{align*}
    \Delta_L^{j} = \min_{\theta} {L}_D(\theta) - \min_{\theta} {L}_{D \setminus \{(x_j,y_j)\}}(\theta).
\end{align*}

\paragraph{Influence Function.} The \textit{Influence Function}~\cite{bae2022if} provides an efficient approximation for the leave-one-out influence. Let $H_D(\theta)$ denote the Hessian matrix summing over data-wise second derivatives:
$
    H_D(\theta) := \sum_{(x_i,y_i) \in D} \nabla^2 l(x_i,y_i;\theta),
$
The influence function for a datapoint $(x_j, y_j)$ is then given by:
\begin{align}
    a_j^{\text{IF}} = J_{avg}(\theta_0)^\top \cdot [H_D(\theta_0)]^{-1} \cdot J_{j}(\theta_0),
    \label{eq:prelim-IF}
\end{align}
where $J_{j}(\theta) := \nabla_{\theta}l(x_j,y_j;\theta)$ represents the Jacobian of the loss function with respect to the model parameters.


\subsection{GUARD attribution}
\textbf{Key Notations Related to Proxy Attribution}. Let the eigenvalues and eigenvectors of the inverse Hessian matrix \( H_D(\theta_0)^{-1} \) be denoted by \( \{\lambda_k\}_k \) and \( \{v_k\}_k \), respectively. Let \( \lambda_{\max} \) and \( \lambda_{\min} \) represent the maximum and minimum eigenvalues, respectively. We define the projections of the averaged Jacobian and the datapoint-specific Jacobian onto the eigenbasis as follows:
\[
    s_{j,k} := J_j(\theta_0) \cdot \frac{v_k}{\|v_k\|}, \quad
    s_{\text{avg},k} := J_{\text{avg}}(\theta_0) \cdot \frac{v_k}{\|v_k\|}.
\]
Additionally, we define \( q_k \) as the product of the component-wise decompositions of \( J_{\text{avg}}(\theta_0) \) and \( J_j(\theta_0) \) in the eigenbasis, i.e., 
$
    q_k := s_{\text{avg},k} \cdot s_{j,k} \|v_k\|^2.
$

Based on these notations, we propose Assumptions \ref{asump:l-smooth}
, under which we derive a bound for \( a_j^{\text{IF}} \) as defined in Eq. \ref{eq:prelim-IF}, as stated in Lemma \ref{lemma:bound}

\begin{assumption}[Smoothness of the Loss Function]
\label{asump:l-smooth}
The loss function \( l(x_i, y_i; \theta) \) is twice differentiable with respect to \( \theta \), and the second derivatives of the loss function are bounded for all data points.
\end{assumption}

\begin{remark}
Many commonly used loss functions, such as the squared loss and logistic loss, satisfy this assumption. 
This assumption ensures that the Hessian matrix exists and is well-behaved, allowing for stable second-order optimization techniques such as Newton's method. 
\end{remark}

\begin{lemma}[Bounds on the Influence Function by Proxy Attribution]
\label{lemma:bound}
Suppose that Assumptions~\ref{asump:l-smooth}
holds. Define $q^+ := \sum_k [\bm{1}_{q_k \geq 0} \cdot q_k]$, and $  
    q^- := \sum_k [\bm{1}_{q_k < 0} \cdot q_k]$. The influence function estimator \( a_j^{\text{IF}} \), as defined in Eq.~\ref{eq:prelim-IF}, satisfies the following bound:
\begin{align}
\label{eq:a-bound}
     \lambda_l \cdot a_j^{\text{GUARD}}
     \leq 
     a_j^{\text{IF}} 
     \leq 
     \lambda_u \cdot a_j^{\text{GUARD}},
\end{align}
where  
$\lambda_u := \frac{\lambda_{\max} q^+ + \lambda_{\min} q^-}{q^+ + q^-},$ and 
$
    \lambda_l := \frac{\lambda_{\min} q^+ + \lambda_{\max} q^-}{q^+ + q^-}.
$
\end{lemma}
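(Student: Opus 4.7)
The plan is to reduce the claim to a weighted-versus-unweighted comparison of the same scalar decomposition, by diagonalizing the inverse Hessian and then exploiting the sign structure of the term-by-term products $q_k$. Under Assumption~\ref{asump:l-smooth} (with the implicit positive-definiteness of $H_D(\theta_0)$ that makes the inverse and its eigen-spectrum well-defined), I would first write the spectral decomposition $[H_D(\theta_0)]^{-1} = \sum_k \lambda_k \hat v_k \hat v_k^\top$, where $\hat v_k = v_k/\|v_k\|$. Substituting into the definition of $a_j^{\text{IF}}$ gives
\[
a_j^{\text{IF}} \;=\; J_{\text{avg}}(\theta_0)^\top \Bigl(\sum_k \lambda_k \hat v_k \hat v_k^\top\Bigr) J_j(\theta_0) \;=\; \sum_k \lambda_k \, s_{\text{avg},k}\, s_{j,k},
\]
while by Parseval's identity (orthonormality of $\{\hat v_k\}$) the proxy score telescopes to $a_j^{\text{GUARD}} = J_{\text{avg}}(\theta_0)^\top J_j(\theta_0) = \sum_k s_{\text{avg},k}\, s_{j,k}$. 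After reconciling the $\|v_k\|^2$ normalization between $s_{\cdot,k}$ and $q_k$, both quantities become sums indexed by $k$ of the \emph{same} scalars $q_k$, except that the influence function weights each term by the eigenvalue $\lambda_k$.

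Next I would split the index set according to the sign of $q_k$, writing $q^+$ and $q^-$ for the nonnegative and negative contributions respectively. On indices with $q_k \geq 0$, multiplication by $\lambda_k \in [\lambda_{\min}, \lambda_{\max}]$ preserves signs and yields $\lambda_{\min} q_k \le \lambda_k q_k \le \lambda_{\max} q_k$; on indices with $q_k < 0$, the direction of the inequality flips, giving $\lambda_{\max} q_k \le \lambda_k q_k \le \lambda_{\min} q_k$. Summing these two cases produces the sharp sandwich
\[
\lambda_{\min} q^+ + \lambda_{\max} q^- \;\le\; a_j^{\text{IF}} \;\le\; \lambda_{\max} q^+ + \lambda_{\min} q^-,
\]
and dividing through by $a_j^{\text{GUARD}} = q^+ + q^-$ recovers exactly the constants $\lambda_l$ and $\lambda_u$ appearing in~\eqref{eq:a-bound}.

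The main obstacle I expect is not the algebra but the sign bookkeeping at the division step: the bound as stated reads $a_j^{\text{IF}}/a_j^{\text{GUARD}} \in [\lambda_l, \lambda_u]$ as a convex combination of $\lambda_{\min}$ and $\lambda_{\max}$, which implicitly assumes $q^+ + q^- > 0$; if $a_j^{\text{GUARD}} < 0$ the inequalities reverse and $\lambda_l,\lambda_u$ exchange roles. I would make this non-degeneracy hypothesis explicit, which is natural in our setting because Assumption~\ref{asum:r-f-entangle}(1) already posits positive global alignment $\langle \overline{g}_r,\overline{g}_f\rangle > 0$ between the retain and forget gradients, and the proxy score inherits this positivity on average. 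The remaining work (collecting the two one-sided bounds into the compact ratio form of $\lambda_l,\lambda_u$) is a direct linear-algebraic manipulation, so the substantive content of the proof lies entirely in the spectral expansion and the sign-partitioned envelope argument.
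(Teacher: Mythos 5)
Your proposal is correct and follows essentially the same route as the paper's proof: diagonalize $[H_D(\theta_0)]^{-1}$, write $a_j^{\text{IF}}=\sum_k \lambda_k q_k$ and $a_j^{\text{GUARD}}=\sum_k q_k$, bound each term by $\lambda_{\min}$ or $\lambda_{\max}$ according to the sign of $q_k$, and recombine into $\lambda_l,\lambda_u$. Your added caveat about the sign of $q^++q^-$ is a fair observation the paper does not discuss, though it only affects the ratio form $a_j^{\text{IF}}/a_j^{\text{GUARD}}\in[\lambda_l,\lambda_u]$; the multiplicative form stated in the lemma follows directly from the identity $\lambda_{\max}q^++\lambda_{\min}q^-=\lambda_u(q^++q^-)$ and needs only $q^++q^-\neq 0$.
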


\remark The proof is provided below. As established in Lemma \ref{lemma:bound}, the influence function \( a_j^{\text{IF}} \) is scaled by the GUARD proxy attribution \( a_j \). Building on this intuition, we later approximate \( a_j^{\text{IF}} \) by \( C \cdot a_j^{\text{GUARD}} \), where \( C \) is a constant that will be canceled by unification (See Section \ref{subsec:method-pipeline}). A discussion on \( C \) is deferred to Appendix \ref{subsec:apdx-thrm-C}. Note that the scale constant $C$ is cancelled by obtaining GUARD multiplier using Eq. \ref{eq:guard-multiplier}. Therefore, in practical implementation, we adopt the unscaled proxy attribution:
    \begin{align}
    \label{eq:proxy-unscale}
        \tilde{a}_i = [J_{avg}^{D_r}]^\top \cdot J_i(\theta_0),\quad \forall i \in I_f
    \end{align}

\begin{proof}
We begin by recalling the expression for the influence function:
\[
a_j^{\text{IF}} = J_{\text{avg}}(\theta_0)^\top \cdot [H_D(\theta_0)]^{-1} \cdot J_j(\theta_0),
\]

where \( J_{\text{avg}}(\theta_0) \) is the average Jacobian of the loss function over all data points. 

Recall the definitions
\[
    s_{j,k} := J_j(\theta_0) \cdot \frac{v_k}{\|v_k\|}, \quad
    s_{\text{avg},k} := J_{\text{avg}}(\theta_0) \cdot \frac{v_k}{\|v_k\|}.
\]
We have $J_{avg}(\theta_0) = \sum_k s_{avg,k} \cdot v_k$ and $J_{j}(\theta_0) = \sum_k s_{j,k} \cdot v_k$. Therefore,
\begin{align*}
    a_j^{\text{IF}} &= J_{\text{avg}}(\theta_0)^\top \cdot [H_D(\theta_0)]^{-1} \cdot J_j(\theta_0)\\
    &= J_{\text{avg}}(\theta_0)^\top \cdot [H_D(\theta_0)]^{-1} \cdot \sum_k [s_{j,k}\cdot v_k]\\
    &= \left[\sum_k [s_{avg,k} \cdot v_k]\right]^\top  \cdot \sum_k[ s_{j,k}\cdot \lambda_k  \cdot v_k ]\\
\end{align*}

As $H_D(\theta_0)$ is symmetric, we have that $H_D(\theta_0)^{-1}$ is asymmetric, and therefore, the eigenvectors of $H_D(\theta_0)^{-1}$ are orthogonal to each other, i.e., $v_i \cdot v_j = 0,\forall i\neq j$. Consequently,
\begin{align*}
     a_j^{\text{IF}} &= \sum_k \lambda_k \cdot   s_{avg,k}\cdot  s_{j,k} \cdot \|v_k\|_2^2\\
     &= \sum_k \lambda_k \cdot q_k\\
     &\leq \sum_k\left[ \bm{1}_{q_k \geq 0} \cdot \lambda_{max} \cdot q_k + \bm{1}_{q_k < 0} \cdot \lambda_{min} \cdot q_k\right] \\
     & =  \sum_k\left[ \bm{1}_{q_k \geq 0} \cdot \lambda_{u} \cdot q_k + \bm{1}_{q_k < 0} \cdot \lambda_{u} \cdot q_k\right]\\
     & = \lambda_u \cdot J_{avg}(\theta_0)^\top \cdot J_j(\theta_0)\\
     & = \lambda_u \cdot a_j,
\end{align*}
which is the upper bound in Lemma \ref{lemma:bound}. Similarly,

\begin{align*}
     a_j^{\text{IF}} &= \sum_k \lambda_k \cdot   s_{avg,k}\cdot  s_{j,k} \cdot \|v_k\|_2^2\\
     &= \sum_k \lambda_k \cdot q_k\\
     &\geq \sum_k\left[ \bm{1}_{q_k \geq 0} \cdot \lambda_{min} \cdot q_k + \bm{1}_{q_k < 0} \cdot \lambda_{max} \cdot q_k\right] \\
     & =  \sum_k\left[ \bm{1}_{q_k \geq 0} \cdot \lambda_{l} \cdot q_k + \bm{1}_{q_k < 0} \cdot \lambda_{l} \cdot q_k\right]\\
     & = \lambda_l \cdot J_{avg}(\theta_0)^\top \cdot J_j(\theta_0)\\
     & = \lambda_l \cdot a_j,
\end{align*}
which is the lower bound in Lemma \ref{lemma:bound}
\end{proof}

\subsection{Choosing the scaling factor $C$} 
\label{subsec:apdx-thrm-C}
By Lemma \ref{lemma:bound}, the influence function estimate \(a_j^{\text{IF}}\) is scaled by the GUARD proxy attribution \(a_j^{GUARD}\), satisfying the bound:  
\[
     \lambda_l \cdot a_j^{GUARD}
     \leq 
     a_j^{\text{IF}} 
     \leq 
     \lambda_u \cdot a_j^{GUARD}.
\]
Here, the upper and lower scaling factors are given by  
\[
\lambda_u := \frac{\lambda_{\max} q^+ + \lambda_{\min} q^-}{q^+ + q^-}, \quad
\lambda_l := \frac{\lambda_{\min} q^+ + \lambda_{\max} q^-}{q^+ + q^-},
\]
where  
\[
q^+ := \sum_k \bm{1}_{q_k \geq 0} \cdot q_k, \quad q^- := \sum_k \bm{1}_{q_k < 0} \cdot q_k.
\]  
The multipliers \(\lambda_u\) and \(\lambda_l\) are data-dependent, varying with each specific sample \((x_j, y_j)\). To emphasize this dependence, we denote the sample-specific bounds as \(\lambda_l^j\) and \(\lambda_u^j\). Consequently, for each \( j \), there exists a constant \( C_j \in [\lambda_l^j, \lambda_u^j] \) such that  
\[
    \hat{\Delta}_L^{j} = C_j \cdot a_j^{GUARD}.
\]

Approximating \(\hat{\Delta}_L^{j}\) by a globally optimized scaling factor \(C\), we obtain the following approximation error 
\[
\text{err} := \frac{1}{n_f} \sum_{j\in I_f} \| \hat{\Delta}_L^j- C\cdot a_j^{GUARD}\|_2^2 = 
\frac{1}{n_f} \sum_{j\in I_f} (C_j - C)^2 \cdot [a_j^{GUARD}]^2.
\]  
Minimizing this error with respect to \(C\), the optimal scaling factor is given by  
\[
C^* = \overline{C}_{GUARD},
\]
where $\overline{C}_{GUARD}:= \frac{\sum_{j \in I_f} C_j \cdot [a_j^{GUARD}]^2}{\sum_{j \in I_f} [a_j^{GUARD}]^2}$ is the average of $\{C_j\}$ weighted by $[a_j^{GUARD}]^2$.

\section{Proofs of theoretical guarantees}
\label{sec:apdx-proof-sr}

\begin{lemma}[Restatement of Lemma \ref{lemma:r-loss-GUARD-GA}]
Under Assumption~\ref{asum:r-f-entangle}, the loss on the retain set under GUARD is lower than that under GA by:
\[
    L_{\gD_r}^{\text{GA}} - L_{\gD_r}^{\text{GUARD}} = \frac{\eta}{(1 - \delta_{\kappa})\tau} \cdot \sigma^2_\kappa + \mathcal{O}(\delta^2),
\]
where $\eta$ is the unlearning rate.
\end{lemma}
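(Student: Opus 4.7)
The strategy is to first-order Taylor expand the retain loss around $\theta_0$ for both updates, then expand the GUARD softmax weights under the condition $\kappa_i/\tau \ll 1$, and finally recognize that the surviving terms collapse to the alignment variance $\sigma_\kappa^2$.

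\textbf{Step 1: Linearize the retain loss.} Using Assumption \ref{asum:small-update} (drop second and higher powers of $\delta_\theta$), I would write
\begin{equation*}
L_{\gD_r}^{\psi} - L_{\gD_r}^{0} = \overline{g}_r^{\top}\bigl(\theta^{\psi} - \theta_0\bigr) + \mathcal{O}(\delta_\theta^2), \qquad \psi \in \{\text{GA}, \text{GUARD}\}.
\end{equation*}
Substituting the GA update $\theta^{\text{GA}} - \theta_0 = \eta\, \overline{g}_f$ and the GUARD update $\theta^{\text{GUARD}} - \theta_0 = \tfrac{\eta}{n_f}\sum_{i \in I_f} \omega_i^{\text{GUARD}} g_i$ gives the difference
\begin{equation*}
L_{\gD_r}^{\text{GA}} - L_{\gD_r}^{\text{GUARD}} = \eta\Bigl[\kappa - \tfrac{1}{n_f}\sum_{i\in I_f} \omega_i^{\text{GUARD}} \kappa_i\Bigr] + \mathcal{O}(\delta_\theta^2),
\end{equation*}
where I have used $\kappa = \tfrac{1}{n_f}\sum_i \kappa_i$ and $\langle \overline{g}_r, g_i\rangle = \kappa_i$.

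\textbf{Step 2: Expand the softmax.} Writing $p_i := \omega_i^{\text{GUARD}}/n_f = e^{-\kappa_i/\tau}/\sum_j e^{-\kappa_j/\tau}$, Assumption \ref{asum:r-f-entangle}(2) lets me use $e^{-\kappa_i/\tau} = 1 - \kappa_i/\tau + \mathcal{O}(\delta_\kappa^2)$. Then $\sum_j e^{-\kappa_j/\tau} = n_f(1-\delta_\kappa) + \mathcal{O}(\delta_\kappa^2)$ and consequently
\begin{equation*}
p_i = \tfrac{1}{n_f(1-\delta_\kappa)}\bigl(1 - \kappa_i/\tau\bigr) + \mathcal{O}(\delta_\kappa^2).
\end{equation*}
Plugging this in and using $\kappa = \tfrac{1}{n_f}\sum_i \kappa_i$ yields
\begin{equation*}
\tfrac{1}{n_f}\sum_i \omega_i^{\text{GUARD}} \kappa_i = \tfrac{1}{1-\delta_\kappa}\Bigl[\kappa - \tfrac{1}{\tau n_f}\sum_i \kappa_i^2\Bigr] + \mathcal{O}(\delta_\kappa^2).
\end{equation*}

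\textbf{Step 3: Collapse to the variance.} Subtracting from $\kappa$ and combining fractions, the $\kappa$ terms produce a factor $-\kappa\,\delta_\kappa = -\kappa^2/\tau$, which cancels cleanly with $\tfrac{1}{\tau n_f}\sum_i \kappa_i^2$ to leave exactly $\sigma_\kappa^2/\tau$. Folding in the $(1-\delta_\kappa)$ denominator and the prefactor $\eta$ yields
\begin{equation*}
L_{\gD_r}^{\text{GA}} - L_{\gD_r}^{\text{GUARD}} = \frac{\eta}{(1-\delta_\kappa)\tau}\,\sigma_\kappa^2 + \mathcal{O}(\delta_\theta^2 + \delta_\kappa^2),
\end{equation*}
and since $\delta = \delta_\kappa + \delta_\theta$ the remainder absorbs into $\mathcal{O}(\delta^2)$.

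\textbf{Anticipated obstacle.} The algebra itself is routine; the delicate part is bookkeeping the error orders. In particular I need to confirm that the $\mathcal{O}(\delta_\kappa^2)$ slack in the softmax expansion does not contaminate the leading variance term after being multiplied by $\kappa_i$ and summed, and that the cross-term $\delta_\kappa \cdot \delta_\theta$ arising from combining the two Taylor expansions is indeed subsumed by $\mathcal{O}(\delta^2) = \mathcal{O}((\delta_\kappa+\delta_\theta)^2)$. I also need to verify that keeping the $1/(1-\delta_\kappa)$ factor in the stated (unexpanded) form, rather than expanding it as $1+\delta_\kappa + \cdots$, is consistent with the $\mathcal{O}(\delta^2)$ convention used in the paper; this is a presentational rather than substantive issue but worth flagging.
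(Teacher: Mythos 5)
Your proposal is correct and follows essentially the same route as the paper's own proof: a first-order Taylor expansion of the retain loss in the weight update, a linearization of the softmax weights under $\kappa_i/\tau \ll 1$, and the observation that $\kappa - \frac{1}{1-\delta_\kappa}\bigl[\kappa - \tfrac{1}{\tau}(\kappa^2+\sigma_\kappa^2)\bigr]$ collapses to $\sigma_\kappa^2/\bigl(\tau(1-\delta_\kappa)\bigr)$. The paper writes the intermediate quantity as $\kappa' = \frac{\tau\kappa}{\tau-\kappa} - \frac{\sigma_\kappa^2+\kappa^2}{\tau-\kappa}$, which is algebraically identical to your Step 2, and it likewise keeps the $(1-\delta_\kappa)^{-1}$ factor unexpanded, so your flagged presentational concern is moot.
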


\begin{proof}
Recall that the losses of different models on the retain set are defined as
\begin{align*}
    L^{0}_{\gD_r} &= \frac{1}{n_r} \sum_{i \in I_r} l(x_i, y_i; \theta^{0}),\\
    L^{\text{GA}}_{\gD_r} &= \frac{1}{n_r} \sum_{i \in I_r} l(x_i, y_i; \theta^{GA}),\\
     L^{\text{GUARD}}_{\gD_r} &= \frac{1}{n_r} \sum_{i \in I_r} l(x_i, y_i; \theta^{\text{GUARD}})
\end{align*}
Correspondingly, the model update rules given by GA and by GUARD are respectively
\begin{align*}
    \theta^{\text{GA}} &= \theta^0 + \eta \cdot\frac{1}{n_f}\sum_{i \in I_f} g_i\\
    \theta^{\text{GUARD}} & = \theta^0 + \eta \cdot \frac{1}{n_f}\sum_{i\in I_f} [\omega_i^{\text{GUARD}} \cdot g_i],
\end{align*}
where
\begin{align}
    \omega_i^{\text{GUARD}} := n_f \cdot \frac{e^{-{a}^{\text{GUARD}}_i/\tau}}{\sum_{j\in I_f} e^{-{a}^{\text{GUARD}}_j/\tau}}, \quad i \in I_f.
\end{align} 

Based on the loss definitions and model update rules, we get the approximations of the increase in the retain loss compared with the model before unlearning. Specifically, the increase in the loss on the retain set induced by GA is given by
\begin{align}
    L_{\gD_r}^{\text{GA}} -  L_{\gD_r}^{\text{0}}& =\frac{1}{n_r} \sum_{i\in I_r} g_i^\top \cdot [\theta^{\text{GA}} - \theta^0] + \gO(\delta_\theta^2) \notag\\
    & = \frac{1}{n_r} \sum_{i\in I_r} g_i^\top \cdot [\eta \cdot\frac{1}{n_f}\sum_{j \in I_f} g_j] + \gO(\delta_\theta^2) \notag\\
    & = \eta \cdot \langle \overline{g}_f , \overline{g}_r \rangle + \gO(\delta_\theta^2) \notag\\
    & = \eta \cdot \kappa + \gO(\delta_\theta^2),
    \label{eq:dlr_GA}
\end{align}
and the increase in the loss on the retain set induced by GUARD is given by 
\begin{align}
    L_{\gD_r}^{\text{GUARD}} -  L_{\gD_r}^{\text{0}}& = \frac{1}{n_r} \sum_{i\in I_r} g_i^\top \cdot [\theta^{\text{GUARD}} - \theta^0]+ \gO(\delta_\theta^2)\notag\\
    & = \frac{1}{n_r} \sum_{i\in I_r} g_i^\top \cdot [\eta \cdot \frac{1}{n_f}\sum_{i\in I_f} \left[\omega_i^{\text{GUARD}} \cdot g_i]\right]+ \gO(\delta_\theta^2)\notag\\
    & = \eta \cdot \langle \overline{g}_r , \overline{g}'_f \rangle + \gO(\delta_\theta^2),
    \label{eq:dlr_GUARD}
\end{align}
where $\overline{g}'_f := \frac{1}{n_f}\sum_{i\in I_f} \left[\omega_i^{\text{GUARD}} \cdot g_i\right]$.
Let $\kappa' := \langle \overline{g}_r , \overline{g}'_f \rangle$. It holds that
\begin{align*}
    \kappa' &= \frac{1}{n_r}\sum_{i\in I_r} g_i^\top \cdot \frac{1}{n_f} \sum_{j\in I_f} \omega_j^{\text{GUARD}} \cdot g_j\\
    & = \frac{1}{n_r \cdot n_f} \sum_{i\in I_r} g_i^\top \cdot \sum_{j\in I_f} \left[n_f \cdot \frac{e^{-a^{\text{GUARD}}_j/\tau}}{\sum_{k\in I_f} e^{-a^{\text{GUARD}}_k/\tau}}\right]\cdot g_j\\
    &= \frac{1}{n_r \cdot n_f} \sum_{i\in I_r} g_i^\top \cdot \sum_{j\in I_f} \left[n_f \cdot \frac{1-a^{\text{GUARD}}_j/\tau}{\sum_{k\in I_f} [1-a^{\text{GUARD}}_k/\tau]}\right]\cdot g_j + \gO(\delta_{\kappa}^2)\\
    & = \frac{1}{n_r \cdot n_f} \sum_{i\in I_r} g_i^\top \cdot \sum_{j\in I_f} \left[ \frac{1-\overline{g}_r^\top \cdot g_j/\tau}{ 1-\overline{g}_r^\top \cdot \overline{g}_f/\tau}\right]\cdot g_j + \gO(\delta_{\kappa}^2)\\
    & = \frac{1}{n_r \cdot n_f \cdot [1-\frac{\kappa}{\tau}]} \sum_{i\in I_r} g_i^\top \sum_{j\in I_f} [g_j-\overline{g}_r^\top g_j/\tau \cdot g_j] + \gO(\delta_{\kappa}^2)\\
    &= \frac{1}{n_r \cdot n_f \cdot [\tau - \kappa]} 
     \left[ n_r \cdot n_f \cdot \kappa \cdot \tau
     -n_r \cdot \overline{g}_r^\top \cdot \sum_{j\in I_f} [\overline{g}_r^\top g_j \cdot  g_j]\right] + \gO(\delta_{\kappa}^2)\\
     & = \frac{\tau \cdot \kappa}{\tau - \kappa} - 
     \frac{1}{n_f \cdot [\tau - \kappa]}\cdot \sum_{j\in I_f} [\overline{g}_r^\top g_j ]^2 + \gO(\delta_{\kappa}^2)
\end{align*}

With a slight abuse of notation, define the alignment between averaged gradients of the Retain Set and datawise gradients of the Forget Set as $\kappa_j := \langle \overline{g}_r, g_j\rangle,\ j\in I_f$. Define its variance as $\sigma_{\kappa}^2 := \frac{1}{n_f}\sum_{j\in I_f}[\kappa_j]^2 - [\frac{1}{n_f}\sum_{j\in I_f} \kappa_j]^2$, equivalent to $\sum_{j\in I_f} {\kappa_j}^2 = n_f \cdot (\sigma^2_{\kappa} + \kappa^2)$. Therefore,
\begin{align}
    \kappa' &= \frac{\tau \cdot \kappa}{\tau - \kappa} - 
     \frac{1}{ \tau - \kappa}\cdot [\sigma_\kappa^2 + \kappa^2] + \gO(\delta_{\kappa}^2)
     \label{eq:dlr_GUARD_1}
\end{align}

Substitute Eq. \ref{eq:dlr_GUARD_1} and Eq. \ref{eq:dlr_GUARD} into Eq. \ref{eq:dlr_GA} and get
\begin{align*}
     L_{\gD_r}^{\text{GA}} - L_{\gD_r}^{\text{GUARD}} =(1- \delta_{\kappa})^{-1} \cdot \eta/\tau \cdot \sigma_\kappa^2 + \gO(\delta^2).
\end{align*}

\end{proof}

\begin{lemma}[Restatement of Lemma \ref{lemma:f-loss-GUARD-GA}]
Under Assumption~\ref{asum:r-f-entangle}, the difference in the forget loss between GUARD and GA is bounded as
\[
    \left| L_{\gD_f}^{\text{GA}} - L_{\gD_f}^{\text{GUARD}} \right| = \delta_{\kappa} \eta \cdot \| \overline{g}_f \|_2^2 + \mathcal{O}(\delta^2).
\]
\end{lemma}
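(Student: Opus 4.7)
The plan is to mirror the derivation of Lemma~\ref{lemma:r-loss-GUARD-GA}, but project everything onto $\overline{g}_f$ instead of $\overline{g}_r$. First, I would invoke Assumption~\ref{asum:small-update} to first-order expand the forget loss around $\theta^0$: since $L_{\gD_f}^\psi - L_{\gD_f}^0 = \overline{g}_f^\top(\theta^\psi - \theta^0) + \gO(\delta_\theta^2)$, plugging in the GA update $\theta^{\text{GA}} - \theta^0 = \eta\,\overline{g}_f$ and the GUARD update $\theta^{\text{GUARD}} - \theta^0 = \eta\,\overline{g}_f'$, with $\overline{g}_f' := \tfrac{1}{n_f}\sum_{i\in I_f}\omega_i^{\text{GUARD}} g_i$, gives
\[
L_{\gD_f}^{\text{GA}} - L_{\gD_f}^{\text{GUARD}} \;=\; \eta\,\overline{g}_f^\top\bigl(\overline{g}_f - \overline{g}_f'\bigr) + \gO(\delta_\theta^2).
\]

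The second step is to linearize the softmax weights using the small-alignment condition in Assumption~\ref{asum:r-f-entangle}(2). Exactly as in the derivation of $\kappa'$ inside the proof of Lemma~\ref{lemma:r-loss-GUARD-GA}, the expansion $e^{-\kappa_i/\tau}=1-\kappa_i/\tau+\gO(\delta_\kappa^2)$ yields $\omega_i^{\text{GUARD}} = (1-\kappa_i/\tau)/(1-\delta_\kappa) + \gO(\delta_\kappa^2)$. Substituting and simplifying leads to
\[
\overline{g}_f - \overline{g}_f' \;=\; \tfrac{1}{1-\delta_\kappa}\Bigl[-\delta_\kappa\,\overline{g}_f + \tfrac{1}{\tau n_f}\sum_{i\in I_f}\kappa_i\,g_i\Bigr] + \gO(\delta_\kappa^2).
\]
Taking the inner product with $\overline{g}_f$ and using $1/(1-\delta_\kappa)=1+\gO(\delta_\kappa)$ to absorb higher powers of $\delta_\kappa$ into the $\gO(\delta^2)$ remainder, the leading contribution separates as $-\delta_\kappa\|\overline{g}_f\|_2^2$. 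Multiplying by $\eta$ and passing to absolute values delivers the claimed leading-order magnitude $\delta_\kappa\,\eta\,\|\overline{g}_f\|_2^2$.

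The hard part will be controlling the residual cross term $\tfrac{\eta}{\tau}\,\overline{g}_f^\top\Sigma_f\,\overline{g}_r$, where $\Sigma_f := \tfrac{1}{n_f}\sum_i g_i g_i^\top$, which is a priori also first-order in $\delta_\kappa$ and does not obviously vanish. My plan is to appeal to the isotropy Assumption~\ref{asum:iso-grad} already carried through the retain-loss analysis: under $\Sigma_f\approx\lambda I$ the cross term collapses to $\eta\lambda\delta_\kappa$, which carries the same $\delta_\kappa$ scaling as the main term and can be merged into the leading coefficient so long as $\lambda$ is comparable to $\|\overline{g}_f\|_2^2$. In the absence of isotropy, a Cauchy--Schwarz estimate $|\overline{g}_f^\top\Sigma_f\overline{g}_r|\le\|\Sigma_f\|_{\mathrm{op}}\|\overline{g}_f\|_2\|\overline{g}_r\|_2$ produces the same qualitative scaling, with any remaining constant folded into the $\gO(\delta^2)$ term under the gradient-norm boundedness implicit in Assumption~\ref{asum:small-update}.
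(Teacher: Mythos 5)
Your proposal follows essentially the same route as the paper's proof: a first-order expansion of the forget loss in the update direction, linearization of the softmax weights via $e^{-a_j/\tau}=1-a_j/\tau+\gO(\delta_\kappa^2)$, and an appeal to Assumption~\ref{asum:iso-grad} to collapse the residual cross term $\tfrac{1}{\tau}\,\overline{g}_f^\top\Sigma_f\,\overline{g}_r$ (the paper writes it as $\tfrac{1}{\tau n_f}\sum_{j}\overline{g}_r^\top g_j\cdot g_j^\top\overline{g}_f$). You correctly identify that this cross term is the only delicate point. The one place where your plan does not quite deliver the stated result is the final disposal of that term. Under isotropy it equals $\lambda\delta_\kappa$, which is \emph{first order} in $\delta_\kappa$, so neither of your two fallbacks works as stated: merging it into the leading coefficient would prove $\left|L_{\gD_f}^{\text{GA}}-L_{\gD_f}^{\text{GUARD}}\right|=\eta\delta_\kappa\bigl(\|\overline{g}_f\|_2^2-\lambda\bigr)+\gO(\delta^2)$ rather than the claimed constant, and a Cauchy--Schwarz bound of a genuinely first-order quantity cannot be folded into $\gO(\delta^2)$. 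The paper instead computes $\lambda=\tfrac{1}{d}\operatorname{Tr}(\Sigma_f)=\tfrac{1}{d}\bigl(\|\overline{g}_f\|_2^2+\mathrm{Var}_g\bigr)$ and discards the cross term on the grounds that the $1/d$ factor suppresses it in the high-dimensional LLM regime (admittedly this suppression is implicit rather than stated as an assumption). To match the lemma exactly you need this trace-normalization step, or some explicit hypothesis that $\lambda\ll\|\overline{g}_f\|_2^2$; with that addition your argument is the paper's argument.
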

\begin{proof}
Recall that the losses of different models on the forget set are defined as
\begin{align*}
    L^{0}_{\gD_f} &= \frac{1}{n_f} \sum_{i \in I_f} l(x_i, y_i; \theta^{0}),\\
    L^{\text{GA}}_{\gD_f} &= \frac{1}{n_f} \sum_{i \in I_f} l(x_i, y_i; \theta^{GA}),\\
     L^{\text{GUARD}}_{\gD_f} &= \frac{1}{n_f} \sum_{i \in I_f} l(x_i, y_i; \theta^{\text{GUARD}}).
\end{align*}

Correspondingly, the model update rules given by GA and by GUARD are respectively
\begin{align*}
    \theta^{\text{GA}} &= \theta^0 + \eta \cdot\frac{1}{n_f}\sum_{i \in I_f} g_i\\
    \theta^{\text{GUARD}} & = \theta^0 + \eta \cdot \frac{1}{n_f}\sum_{i\in I_f} [\omega_i^{\text{GUARD}} \cdot g_i]
\end{align*}

Suppose that the second and higher orders of the norm of change in model weights can be omitted in the theoretical analysis. Then, based on loss definitions and model update rules, we get the approximations of the increase in the forget loss compared with the model before unlearning. Specifically, the increase in the loss on the forget set induced by GA is given by
\begin{align}
    L_{\gD_f}^{\text{GA}} -L_{\gD_f}^{0} & =\frac{1}{n_f} \sum_{i\in I_f} g_i^\top \cdot [\theta^{\text{GA}} - \theta^0] + \gO(\delta^2_\theta)\notag\\
    & = \frac{1}{n_f} \sum_{i\in I_f} g_i^\top \cdot [\eta \cdot\frac{1}{n_f}\sum_{j \in I_f} g_j] + \gO(\delta^2_\theta)\notag \\
    & = {\eta}\cdot  \langle \overline{g}_f, \overline{g}_f \rangle+ \gO(\delta^2_\theta), 
    \label{eq:dlf_GA}
\end{align}
and the increase in the loss on the Forget Set induced by GUARD is given by 
\begin{align}
   L_{\gD_f}^{\text{GUARD}}-L_{\gD_f}^{0} & = \frac{1}{n_f} \sum_{i\in I_f} g_i^\top \cdot [\theta^{\text{GUARD}} - \theta^0] + \gO(\delta^2_\theta) \notag\\
    & = \frac{1}{n_f} \sum_{i\in I_f} g_i^\top \cdot \left[\eta \cdot \frac{1}{n_f}\sum_{j\in I_f} [\omega_j^{\text{GUARD}} \cdot g_j]\right]+ \gO(\delta^2_\theta) \notag\\
    & = \eta \cdot \langle \overline{g}'_f, \overline{g}_f \rangle + \gO(\delta^2_\theta),
    \label{eq:dlf_GUARD}
\end{align}
where $\overline{g}'_{f}$ is 
the weighted average gradient over forget set, $\overline{g}'_{f} := \frac{1}{n_f} \sum_{i\in I_f} [\omega_i^{\text{GUARD}} \cdot g_i]$. Also,
\begin{align}
    \langle \overline{g}_f, \overline{g}'_f\rangle & = \left[\frac{1}{n_f}\sum_{i\in I_f} g_i\right]^\top \cdot \left[ \frac{1}{n_f}\sum_{j\in I_f}[\omega_j^{\text{GUARD}} \cdot g_j]\right] \notag\\
    & = \frac{1}{{n_f}^2} \sum_{i,j\in I_f} \left[n_f \cdot \frac{e^{-a^{\text{GUARD}}_j/\tau}}{\sum_{k\in I_f} e^{-a^{\text{GUARD}}_k/\tau}}\cdot g_i^\top \cdot g_j\right]\notag\\
    & = \frac{1}{n_f} \sum_{i,j \in I_f} \left[ \frac{1-\overline{g}_r^\top \cdot g_j/\tau}{\sum_{k\in I_f}[1-\overline{g}_r^\top \cdot g_k/\tau] }  \cdot g_i^\top \cdot g_j\right]+ \gO(\delta^2_\kappa) \notag\\
    & = \frac{1}{{n_f}^2} \cdot \frac{\tau}{\tau - \kappa} \sum_{i,j \in I_f} \left[[1 - \overline{g}_r^\top \cdot g_j / \tau] \cdot g_i^\top \cdot g_j\right]+ \gO(\delta^2_\kappa) \notag\\
    & = \frac{\tau}{{n_f} (\tau - \kappa)} \left[{n_f}\|\overline{g}_f\|_2^2 - \frac{1}{\tau}\sum_{ j \in I_f}\left[\overline{g}_r^\top \cdot g_j \cdot g_j^\top \cdot \overline{g}_f  \right]\right]+ \gO(\delta^2_\kappa). 
    \label{eq:dlf_GUARD_1}
\end{align}
Let \( d \) be the dimensionality of the gradient vectors. Then we can write
\begin{align}
    \sum_{j \in I_f} \overline{g}_r^\top g_j \cdot g_j^\top \overline{g}_f
    &= \overline{g}_r^\top \left( \sum_{j \in I_f} g_j g_j^\top \right) \overline{g}_f \notag \\
    &= n_f \cdot \overline{g}_r^\top \Sigma_f \overline{g}_f \notag\\
    &\approx n_f \cdot \lambda \cdot \overline{g}_r^\top \overline{g}_f \notag\\
    &= n_f \cdot \lambda \cdot \kappa.
    \label{eq:dlf_GUARD_2}
\end{align}

    
Let \( \text{Cov}_f \) be the sample covariance matrix of $g_j$, $j \in I_f$, defined as
$
\text{Cov}_f := \frac{1}{n_f} \sum_{j \in I_f} (g_j - \overline{g}_f)(g_j - \overline{g}_f)^\top
$.
Then we have
\begin{align}
\lambda & := \frac{1}{d} \text{Tr}(\Sigma_f) \notag\\
& = \frac{1}{d} \text{Tr}(\overline{g}_f \overline{g}_f^\top + \text{Cov}_f) \notag \\
& = \frac{1}{d} \left( \|\overline{g}_f\|_2^2 + \text{Tr}(\text{Cov}_f) \right) \notag\\
 & = \frac{1}{d} \left( \|\overline{g}_f\|_2^2 + \text{Var}_g \right),
\label{eq:dlf_GUARD_3}
\end{align}
where $\text{Var}_f$ is the total variance of the gradients defined as
$
\text{Var}_f := \sum_{k=1}^d \text{Var}\left[(g_i)_k\right]
$,
where \( (g_i)_k \) denotes the \( k \)-th coordinate of the gradient vector \( g_i \). 


Substituting Eq. \ref{eq:dlf_GUARD_2} and Eq. \ref{eq:dlf_GUARD_3} into Eq. \ref{eq:dlf_GUARD_1}, we get:
\begin{align}
     \langle \overline{g}'_f , \overline{g}_f\rangle
     & = \frac{\tau}{\tau - \kappa}\|\overline{g}_f\|_2^2 - \frac{\kappa}{(\tau - \kappa)\cdot d} \left( \|\overline{g}_f\|_2^2 + \text{Var}_g \right) + \gO(\delta_{\kappa}^2) \notag\\
     & = (1-\kappa/\tau)^{-1}\|\overline{g}_f\|_2^2 - \frac{\kappa/\tau}{1- \kappa/\tau}\cdot d^{-1} \cdot (\|\overline{g}_f\|_2^2 + \text{Var}_g) + \gO(\delta_{\kappa}^2) \notag\\
     & = (1 + \kappa/\tau)\|\overline{g}_f\|_2^2 - \kappa/\tau \cdot (1 + \kappa/\tau) \cdot d^{-1}\cdot  (\|\overline{g}_f\|_2^2 + \text{Var}_g) + \gO(\delta_{\kappa}^2)\notag\\
     & = (1+ \delta_{\kappa})\|\overline{g}_f\|_2^2 + \gO(\delta_{\kappa}^2).
     \label{eq:dlf_GUARD_4}
\end{align}
Substituting Eq. \ref{eq:dlf_GUARD_4} into Eq. \ref{eq:dlf_GUARD}, we get
\begin{align}
    L_{\gD_f}^{\text{GUARD}} - L_{\gD_f}^{\text{0}} = \eta \cdot (1+ \delta_{\kappa})\|\overline{g}_f\|_2^2 + \gO(\delta^2)
    \label{eq:dlf_GUARD_5}
\end{align}

Combining Eq. \ref{eq:dlf_GUARD_5} with \ref{eq:dlf_GA}, we get
\begin{align*}
    \left|L_{\gD_f}^{\text{GA}} - L_{\gD_f}^{\text{GUARD}}\right|  = \eta \cdot \delta_{\kappa}\cdot \|\overline{g}_f\|_2^2 + \gO(\delta^2).
\end{align*}

\end{proof}

\begin{theorem}[Restatement of Theorem \ref{thrm:SR-GUARD-GA}]
Under Assumption~\ref{asum:r-f-entangle}, GUARD reduces the Sacrifice Rate relative to GA as:
\[
    \rho^{\text{GA}} - \rho^{\text{GUARD}} = \frac{\kappa^2 + \sigma^2_\kappa}{\tau \cdot \| \overline{g}_f \|_2^2} + \mathcal{O}(\delta^2).
\]
\end{theorem}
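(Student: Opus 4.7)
\textbf{Proof proposal for Theorem \ref{thrm:SR-GUARD-GA}.} The plan is to assemble $\rho^{\text{GA}}$ and $\rho^{\text{GUARD}}$ from the per-set loss expansions produced inside the proofs of Lemmas~\ref{lemma:r-loss-GUARD-GA} and~\ref{lemma:f-loss-GUARD-GA}, and then take the difference of the two ratios while carrying only first-order terms in $\delta=\delta_\kappa+\delta_\theta$.

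First, I would recall the four loss-increment expansions produced earlier: $L_{\gD_r}^{\text{GA}}-L_{\gD_r}^{0}=\eta\kappa+\mathcal{O}(\delta^2)$ from Eq.~\eqref{eq:dlr_GA}; $L_{\gD_r}^{\text{GUARD}}-L_{\gD_r}^{0}=\eta\kappa-\frac{\eta}{(1-\delta_\kappa)\tau}\sigma_\kappa^2+\mathcal{O}(\delta^2)$ obtained by subtracting Lemma~\ref{lemma:r-loss-GUARD-GA} from the GA increment; $L_{\gD_f}^{\text{GA}}-L_{\gD_f}^{0}=\eta\|\overline g_f\|_2^2+\mathcal{O}(\delta^2)$ from Eq.~\eqref{eq:dlf_GA}; and $L_{\gD_f}^{\text{GUARD}}-L_{\gD_f}^{0}=\eta(1+\delta_\kappa)\|\overline g_f\|_2^2+\mathcal{O}(\delta^2)$ from Eq.~\eqref{eq:dlf_GUARD_5}. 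These give immediately
\[
\rho^{\text{GA}}=\frac{\kappa}{\|\overline g_f\|_2^2}+\mathcal{O}(\delta^2),\qquad
\rho^{\text{GUARD}}=\frac{\kappa-\sigma_\kappa^2/[(1-\delta_\kappa)\tau]}{(1+\delta_\kappa)\|\overline g_f\|_2^2}+\mathcal{O}(\delta^2),
\]
where the $\eta$'s cancel cleanly in each ratio.

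Next I would perform a first-order Taylor expansion of the GUARD ratio in the small quantity $\delta_\kappa=\kappa/\tau\ll 1$. Writing $(1+\delta_\kappa)^{-1}=1-\delta_\kappa+\mathcal{O}(\delta_\kappa^2)$ and $(1-\delta_\kappa)^{-1}=1+\mathcal{O}(\delta_\kappa)$, the numerator becomes $\kappa-\sigma_\kappa^2/\tau+\mathcal{O}(\delta^2)$, so
\[
\rho^{\text{GUARD}}=\Bigl(\tfrac{\kappa}{\|\overline g_f\|_2^2}-\tfrac{\sigma_\kappa^2}{\tau\|\overline g_f\|_2^2}\Bigr)(1-\delta_\kappa)+\mathcal{O}(\delta^2)=\tfrac{\kappa}{\|\overline g_f\|_2^2}-\tfrac{\kappa\delta_\kappa}{\|\overline g_f\|_2^2}-\tfrac{\sigma_\kappa^2}{\tau\|\overline g_f\|_2^2}+\mathcal{O}(\delta^2).
\]
Substituting $\delta_\kappa=\kappa/\tau$ and subtracting from $\rho^{\text{GA}}$, the leading $\kappa/\|\overline g_f\|_2^2$ terms cancel and the remaining two contributions combine into $(\kappa^2+\sigma_\kappa^2)/(\tau\|\overline g_f\|_2^2)$, which is exactly the target identity.

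The main obstacle is bookkeeping of error terms, not algebra: the numerators and denominators of $\rho^\psi$ themselves carry $\mathcal{O}(\delta^2)$ terms, and dividing such an expansion by another small number (the forget-loss gap is itself $\mathcal{O}(\eta)$) must not inflate the error. I would handle this by factoring out the common $\eta$ before dividing, so that the $\mathcal{O}(\delta^2)$ remainders stay additive at the level of the ratio, and by confirming that $\delta_\kappa\sigma_\kappa^2/\tau$ and the $(1-\delta_\kappa)^{-1}-1$ correction are both $\mathcal{O}(\delta^2)$ under Assumption~\ref{asum:r-f-entangle} (since $\sigma_\kappa^2/\tau$ and $\delta_\kappa$ are each $\mathcal{O}(\delta)$). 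With this care, the stated asymptotic identity follows directly, completing the proof.
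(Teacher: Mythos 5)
Your proposal is correct and follows essentially the same route as the paper: it assembles the same four first-order loss increments from Eqs.~\eqref{eq:dlr_GA}, \eqref{eq:dlr_GUARD_1}, \eqref{eq:dlf_GA}, and \eqref{eq:dlf_GUARD_5}, cancels $\eta$, Taylor-expands the GUARD ratio in $\delta_\kappa=\kappa/\tau$, and subtracts, with the same bookkeeping that absorbs the cross terms (e.g.\ $\delta_\kappa\sigma_\kappa^2/\tau$) into $\mathcal{O}(\delta^2)$. The only cosmetic difference is that you carry the $(1-\delta_\kappa)^{-1}$ factor from the statement of Lemma~\ref{lemma:r-loss-GUARD-GA} rather than re-expanding $\kappa'$ directly, which yields identical expressions to first order.
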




\begin{proof}
Rewrite Eq. \ref{eq:dlr_GUARD_1} as
\begin{align}
    \kappa' &= \frac{\tau \cdot \kappa}{\tau - \kappa} - 
     \frac{1}{ \tau - \kappa}\cdot [\sigma_\kappa^2 + \kappa^2] + \gO(\delta_{\kappa}^2)\notag \\
     & = \frac{\kappa - \sigma_\kappa^2/\tau -\kappa^2/\tau}{1-\kappa/\tau} + \gO(\delta_{\kappa}^2)\notag \\
     & = (\kappa - \sigma_\kappa^2/\tau -\kappa/\tau \cdot \kappa) \cdot (1 + \kappa/\tau) + \gO(\delta_{\kappa}^2) \notag\\
     & = \kappa -\sigma_\kappa^2/\tau - \sigma_\kappa^2 \cdot \kappa / \tau^2 + \gO(\delta^2).
     \label{eq:dlr_GUARD_2}
\end{align}

Substituting Eq. \ref{eq:dlr_GUARD_2} into Eq. \ref{eq:dlr_GUARD}, we have
\begin{align}
    L_{\gD_r}^{\text{\text{GUARD}}}- L_{\gD_r}^{\text{\text{0}}} = \eta \cdot (\kappa -\sigma_\kappa^2/\tau - \sigma_\kappa^2 \cdot \kappa / \tau^2) + \gO(\delta_\kappa^2).
    \label{eq:dlr_GUARD_3}
\end{align}

Combining Eq. \ref{eq:dlr_GA} and Eq. \ref{eq:dlf_GA} we get
\begin{align}
    \rho^{\text{GA}} & = \frac{ L_{\gD_r}^{\text{GA}} - L_{\gD_r}^{\text{0}}}{ L_{\gD_f}^{\text{GA}} - L_{\gD_f}^{\text{0}}} = \frac{\eta \kappa}{\eta\|\overline{g}_f\|_2^2} + \gO(\delta^2) = \|\overline{g}_f\|_2^{-2}\cdot \kappa + \gO(\delta^2).
    \label{eq:sr_GA}
\end{align}

Combining Eq. \ref{eq:dlr_GUARD_3} and Eq. \ref{eq:dlf_GUARD_5} we get
\begin{align}
    \rho^{\text{GUARD}} & = \frac{ L_{\gD_r}^{\text{GUARD}} - L_{\gD_r}^{\text{0}}}{ L_{\gD_f}^{\text{GUARD}} - L_{\gD_f}^{\text{0}}} \notag \\
    & = \frac{ \eta \cdot (\kappa -\sigma_\kappa^2/\tau - \sigma_\kappa^2 \cdot \kappa / \tau^2)}{\eta \cdot (1+ \kappa/\tau)\|\overline{g}_f\|_2^2} + \gO(\delta^2) \notag\\
    & = \|\overline{g}_f\|_2^{-2} \cdot(\kappa -\sigma_\kappa^2/\tau - \sigma_\kappa^2/\tau \cdot \kappa / \tau)(1-\kappa/\tau) + \gO(\delta^2) \notag\\
    & = \|\overline{g}_f\|_2^{-1}\cdot (\kappa -\sigma_\kappa^2/\tau - \kappa^2/\tau)  + \gO(\delta^2) 
    \label{eq:sr_GUARD}
\end{align}

Combining Eq. \ref{eq:sr_GA} and Eq. \ref{eq:sr_GUARD} we get
\begin{align*}
    \rho^{\text{\text{GA}}} -\rho^{\text{\text{GUARD}}} =   \frac{\kappa^2 + \sigma^2_\kappa}{\tau \cdot \|\overline{g}_f\|_2^2} + \gO(\delta^2).
\end{align*}
\end{proof}

\subsection{Analysis of Key Factors for Theoretical Guarantees}
\label{sec:apdx-thrm-factor}
\begin{enumerate}
    \item \textbf{Unlearning rate $\eta$.}
    An increased unlearning rate $\eta$ improves retention on the Retain Set $\gD_r$ by amplifying gradient alignment (Lemma~\ref{lemma:r-loss-GUARD-GA}). However, it may slightly reduce unlearning effectiveness by overcorrecting the model on $\gD_f$ (Lemma~\ref{lemma:f-loss-GUARD-GA}). Overall, a higher $\eta$ leads to a greater reduction in the Sacrifice Rate (Theorem~\ref{thrm:SR-GUARD-GA}).
    
    \item \textbf{Temperature $\tau$.}
    Lowering the temperature induces a sharper attribution distribution across data points. This has three effects: (1) it can enhance retention by emphasizing high-variance alignment directions (Lemma~\ref{lemma:r-loss-GUARD-GA}); (2) it may weaken unlearning effectiveness due to increased entanglement between $\gD_f$ and $\gD_r$ (Lemma~\ref{lemma:f-loss-GUARD-GA}); and (3) it can result in a larger reduction in the Sacrifice Rate, especially when $\delta_{\kappa} = \kappa / \tau \ll 1$ (Theorem~\ref{thrm:SR-GUARD-GA}).
    
    \item \textbf{Knowledge alignment $\kappa = \langle \overline{g}_f, \overline{g}_r \rangle$.}
    A smaller value of $\kappa$ indicates weaker coupling between the forget and retain gradients. This leads to improved retention (due to reduced interference), more effective unlearning (as $\gD_f$ updates minimally affect $\gD_r$), and a greater reduction in the Sacrifice Rate.
    
    \item \textbf{Variance of datawise alignment $\sigma_\kappa^2$.}
    A higher variance $\sigma_\kappa^2$ implies greater heterogeneity in the alignment between individual examples in $\gD_f$ and $\gD_r$. This allows GUARD to better disentangle retention and forgetting. As a result, retention improves, forget effectiveness remains largely stable, and the Sacrifice Rate reduction is more pronounced.

    \item \textbf{Norm of the average forget gradient $\|\overline{g}_f\|$.}
    Larger values of $\|\overline{g}_f\|$ have minimal effect on retention. However, they hinder unlearning by making the forget gradients more dominant, thereby resisting suppression (Lemma~\ref{lemma:f-loss-GUARD-GA}). This also leads to a smaller reduction in the Sacrifice Rate due to their influence on the denominator of the alignment term (Theorem~\ref{thrm:SR-GUARD-GA}).

\end{enumerate}

\begin{figure}
    \centering
    \includegraphics[width=0.7\linewidth]{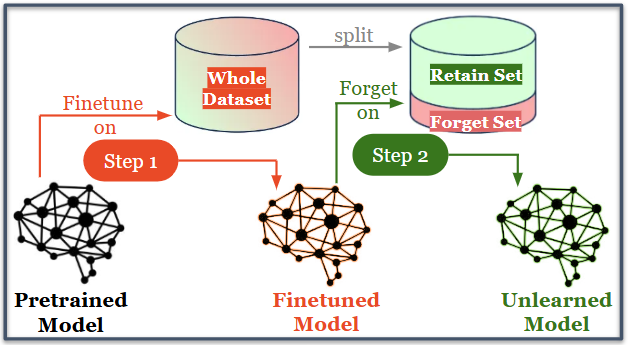}
    \caption{Illustration of the standard unlearning pipeline for LLMs, which is applying an unlearning algorithm to remove the influence of a specific subset of data (Forget Set).}
    \label{fig:llm-unlearn-pipeline}
\end{figure}

\section{Algorithms}

\subsection{Update Rules of Baseline Methods}
\label{subsec:apdx-unlearn-update}
We list the baseline unlearning algorithms provided in TOFU benchmark~\cite{maini2024tofu}, including Gradient Ascent (GA), Gradient Difference (GD), KL Minimization (KM), and Preference Optimization (PO).

\textbf{Gradient Ascent (GA)}. The Gradient Ascent method aims to reduce the model’s confidence in its initial predictions on the forget set. Specifically, for each instance in \( D_f \), the goal is to maximize the training loss, encouraging the model to diverge from its prior learned outputs, and the overall loss objective is defined as:
\begin{equation}
    L_{GA} = - L_{D_f}(\theta)
\end{equation}

\textbf{Gradient Difference (GD)}~\cite{lu2022quark}. The Gradient Difference method builds upon the gradient ascent approach by not only increasing loss on the forget set \( D_f \) but also preserving model performance on the retain set \( D_r \). The objective function for this approach is formulated as:
\begin{equation}
\label{eq:apdx-GD-update}
    L_{\text{GD}} = -L_{D_F}(\theta) + L_{D_r}(\theta).
\end{equation}
To balance computational cost, for each sample processed from \( D_f \), a corresponding example from \( D_r \), ensuring the model remains within feasible resource constraints.

\textbf{KL Divergence Minimization (KM)}. The KL Minimization technique aims to regulate the model’s unlearning process by ensuring that predictions on the retain set \( D_r \) remain consistent with those of the original model while still reducing accuracy on \( D_f \). Let \( M \) be the model function, where \( M(\cdot) \) produces a probability distribution over possible token predictions. The training objective is expressed as:
\begin{equation}
\label{eq:apdx-KM-update}
    L_{\text{KM}} = -L_{D_f}(\theta) + \frac{1}{|D_r|} \sum_{s \in D_r} \frac{1}{|s|} \sum_{i=2}^{|s|} \text{KL} \big( M_{\text{0}}(s_{<i}) \parallel M_{\text{u}}(s_{<i}) \big).
\end{equation}
Here, \( M_{\text{0}} \) and \( M_{\text{u}} \) refer to the model before and after unlearning, respectively. Due to resource constraints, instances from \( D_r \) are sampled randomly, while the entire forget set is used.

\textbf{Preference-Based Optimization}. Inspired by Direct Preference Optimization (DPO) \citep{rafailov2023direct}, this method substitutes responses in \( D_f \) with alternative neutral responses such as “I do not have that information”. The loss function is as follows:
\begin{equation}
\label{eq:apdx-PO-update}
    L_{\text{PO}} = L_{D_r}(\theta) + L_{D_f^{\text{idk}}}(\theta).
\end{equation}
This ensures that while the model updates its responses for \( D_f \), its overall linguistic abilities and predictive accuracy on \( D_r \) remain stable.

\subsection{GUARD objective for frameworks other than GA}
\label{subsec:apdx-guard-obj}
Given the forget dataset $D_f$ and a model parameterized by $\theta$, with slight abuse of notations, the GUARD loss for additional frameworks are defined as follows:
\begin{itemize}
    \item GUARD objective under Gradient Difference.
    \begin{align*}
        L^{GUARD}_{GD}(\theta) :=  -L^{GUARD}_{D_F}(\theta) + L_{D_r}(\theta)
    \end{align*}
    \item GUARD objective under KL Minimization.
    \begin{equation}
        L_{\text{KM}}^{GUARD} = -L^{GUARD}_{D_f}(\theta) + \frac{1}{|D_r|} \sum_{s \in D_r} \frac{1}{|s|} \sum_{i=2}^{|s|} \text{KL} \big( M_{\text{0}}(s_{<i}) \parallel M_{\text{u}}(s_{<i}) \big).
    \end{equation}
    \item GUARD objective under Preference Optimization.
    \begin{equation}
    \label{eq:apdx-PO-update}
        L_{\text{PO}}^{GUARD} = L_{D_r}(\theta) + L^{GUARD}_{D_f^{\text{idk}}}(\theta).
    \end{equation}
    \item GUARD objective under Negative Preference Optimization.
    \begin{equation}
    \label{eq:apdx-NPO-update}
        L_{\text{NPO}}^{GUARD} = L^{GUARD}_{D_f,\text{NPO}}(\theta) := \frac{1}{n_f} \sum_{i \in I_f} \omega^{GUARD}_i \cdot \frac{2}{\beta} \log\left(1 + \left(\frac{\pi_\theta(y_i|x_i)}{\pi_{\text{ref}}(y_i|x_i)}\right)^\beta\right).
    \end{equation}
\end{itemize}

\subsection{GUARD algorithms for additional baselines}
\label{subsec:apdx-guard-algo}
\begin{algorithm}[H]
    \caption{GUARD: Guided Unlearning and Retention via Data Attribution}
    \label{alg:GUARD-extra}
    \begin{algorithmic}[1]
        \State \textbf{Input}: Fine-tuned model weights $\theta_0$, Unlearning rate $\eta$, Baseline method $B$. 
        \State Compute attribution scores $a_i$ for all $i \in I_f$  \Comment{Estimate data attribution for forget set}
        \State Compute GUARD multipliers $\omega_i$ for all $i \in I_f$ using Eq.~\ref{eq:guard-multiplier} \Comment{Assign unlearning weights}
        \If {$B$ is GD}
        \Comment{For Gradient Difference}
        \State {Compute unlearned model parameters $\theta_{\text{GD}}^{\text{GUARD}}$ using Eq.~\ref{eq:apdx-GD-update}}
        \Comment{GUARD Unlearning}
        \Else
            \If {$B$ is KM}
            \Comment{For KL Minimization}
            \State Compute unlearned model parameters $\theta_{\text{KM}}^{\text{GUARD}}$ using Eq.~\ref{eq:apdx-KM-update}
            \Comment{GUARD Unlearning}
            \Else
                \If {$B$ is PO}
                \Comment{For Preference Optimization}
                \State Compute unlearned model parameters $\theta_{\text{PO}}^{\text{GUARD}}$ using Eq.~\ref{eq:apdx-PO-update}
                \Comment{GUARD Unlearning}
                \ElsIf{$B$ is NPO}
                \Comment{For Negative Preference Optimization}
                \State Compute unlearned model parameters $\theta_{\text{NPO}}^{\text{GUARD}}$ using Eq.~\ref{eq:apdx-NPO-update}
                \Comment{GUARD Unlearning}
                \EndIf
            \EndIf
        \EndIf
        \State \textbf{Output}: Unlearned model weights $\theta_{B}^{GUARD}$.
    \end{algorithmic}
\end{algorithm}
\section{Additional results}
\subsection{Additional Settings}
\label{subsec:config}
\paragraph{Configuration and Hyperparameters.} All experiments were evaluated on equal-cost hardware, Ubuntu 22.04 LTS system with 2 Intel(R) Xeon(R) Gold 6138 CPUs with 20 cores, 4 NVIDIA Tesla V100-SXM2 GPUs with 32 GB memory, and 384 GB RAM. All experiments use 40 CPU cores and 4 GPUs. 
Unless otherwise specified, the following configurations are used:

TOFU benchmark: During the fine-tuning phase, we train for 5 epochs using a learning rate of $10^{-5}$ and a weight decay of 0.01. The batch size is set to 4 for Phi-1.5B and 1 for LLaMA-2-7B.
During the data attribution phase, we use a threshold of $\tau = 0.03$ unless stated otherwise.
In the forgetting phase, by default, each experiment use 1 training epoch, a batch size of 1, a learning rate of $10^{-5}$, and a weight decay of 0.01.

MUSE benchmark: MUSE provides pre-finetuned large models, which can be directly downloaded from its Huggingface repository. During the data attribution phase, we use a threshold of $\tau = 0.03$ unless stated otherwise. In the forgetting phase, by default, each experiment use exactly the same setting compare to MUSE benchmark's instruction.

\subsection{Results for LLaMA-2-7B on the TOFU Dataset}
We present the results in Table~\ref{tab:llama-forget01},~\ref{tab:llama-forget05}.
\label{subsec:apdx-exp-llama}

\begin{table}[ht]
\centering
\caption{Unlearning experiments with LLaMA-2-7B on 1\% of all data points in TOFU. \textsc{GUARD} consistently reduces the Sacrifice Rates across all baselines and evaluation metrics, especially on the Retain Set. It can also lead to negative Sacrifice Rates (i.e., performance gains) on generalization sets, suggesting enhanced generalization via reduction of overfitting on the forget set.
}
\vspace{-0.3em}
\label{tab:llama-forget01}
\small
\begin{tabular}{lccccccccc}
\toprule
\multirow{2}{*}{\textbf{Methods}} 
& \multicolumn{3}{c}{\textbf{Retain Set($\downarrow$)}} 
& \multicolumn{3}{c}{\textbf{Real Author Set($\downarrow$)}} 
& \multicolumn{3}{c}{\textbf{Real World Set($\downarrow$)}} \\
\cmidrule(lr){2-4} \cmidrule(lr){5-7} \cmidrule(lr){8-10}
& $\rho_{r}$ & $\rho_{p}$ & $\rho_{t}$ 
& $\rho_{r}$ & $\rho_{p}$ & $\rho_{t}$ 
& $\rho_{r}$ & $\rho_{p}$ & $\rho_{t}$ \\
\midrule
GA 
& 87.87 & 91.25 & 289.06 
& 58.15 & 73.91 & 172.38 
& 31.54 & 64.55 & 119.48 \\
GA + GUARD 
& \textbf{49.82} & \textbf{55.32} & \textbf{124.73} 
& \textbf{39.59} & \textbf{52.36} & \textbf{90.89} 
& \textbf{17.68} & \textbf{-18.82} & \textbf{-5.93} \\
\midrule
KM 
& 82.45 & 79.01 & 134.99 
& 41.42 & 18.10 & 129.15 
& 5.96 & 19.09 & 59.17 \\
KM + GUARD 
& \textbf{43.42} & \textbf{45.08} & \textbf{24.69} 
& \textbf{27.16} & \textbf{0.18} & \textbf{55.38} 
& \textbf{-2.94} & \textbf{-7.08} & \textbf{-1.13} \\
\midrule
PO 
& 24.59 & 47.38 & 94.13 
& 16.39 & 14.87 & 61.95 
& -15.34 & 5.84 & -21.89 \\
PO + GUARD 
& \textbf{9.36} & \textbf{31.52} & \textbf{15.99} 
& \textbf{6.42} & \textbf{4.55} & \textbf{51.36} 
& \textbf{-17.40} & \textbf{-2.22} & \textbf{-29.21} \\
\midrule
GD 
& 11.97 & 12.15 & 91.65 
& 2.63 & 4.49 & -15.86 
& -14.29 & -17.52 & -29.21 \\
GD + GUARD 
& \textbf{1.64} & \textbf{8.98} & \textbf{15.51} 
& \textbf{-1.53} & \textbf{-3.78} & \textbf{-18.04} 
& \textbf{-17.91} & \textbf{-18.59} & \textbf{-32.02} \\
\bottomrule
\end{tabular}

\end{table}

\begin{table}[ht]
\centering
\caption{Unlearning experiments with LLaMA-2-7B on 5\% of all data points in TOFU. \textsc{GUARD} consistently reduces the Sacrifice Rates across all evaluation metrics and baselines, with particularly large improvements over baselines that exhibiting weak retention (the most substantial gains are observed against GA, which has the highest Sacrifice Rates).}
\vspace{-0.3em}
\label{tab:llama-forget05}
\small
\begin{tabular}{lccccccccc}
\toprule
\multirow{2}{*}{\textbf{Methods}} 
& \multicolumn{3}{c}{\textbf{Retain Set($\downarrow$)}} 
& \multicolumn{3}{c}{\textbf{Real Author Set($\downarrow$)}} 
& \multicolumn{3}{c}{\textbf{Real World Set($\downarrow$)}} \\
\cmidrule(lr){2-4} \cmidrule(lr){5-7} \cmidrule(lr){8-10}
& $\rho_{r}$ & $\rho_{p}$ & $\rho_{t}$ 
& $\rho_{r}$ & $\rho_{p}$ & $\rho_{t}$ 
& $\rho_{r}$ & $\rho_{p}$ & $\rho_{t}$ \\
\midrule
GA 
& 90.57 & 118.59 & 401.01 
& 80.77 & 88.02 & 313.25 
& 64.76 & 37.84 & 216.97 \\
GA + GUARD 
& \textbf{64.31} & \textbf{87.93} & \textbf{237.29} 
& \textbf{61.73} & \textbf{65.49} & \textbf{236.95} 
& \textbf{36.65} & \textbf{5.32} & \textbf{165.84} \\
\midrule
KM 
& 86.15 & 112.50 & 337.22 
& 78.74 & 42.56 & 180.75 
& 20.50 & 20.25 & 98.92 \\
KM + GUARD 
& \textbf{60.60} & \textbf{86.79} & \textbf{201.89} 
& \textbf{58.10} & \textbf{25.17} & \textbf{134.08} 
& \textbf{-5.24} & \textbf{-11.51} & \textbf{66.66} \\
\midrule
PO 
& 47.96 & 104.14 & 200.59 
& 42.45 & 25.22 & 152.90 
& 6.24 & -9.37 & 19.02 \\
PO + GUARD 
& \textbf{25.24} & \textbf{82.39} & \textbf{118.83} 
& \textbf{25.92} & \textbf{13.25} & \textbf{97.87} 
& \textbf{-4.73} & \textbf{-12.18} & \textbf{-11.29} \\
\midrule
GD 
& 40.97 & 79.47 & 114.25 
& 21.40 & -14.44 & -17.61 
& -21.13 & -11.53 & -11.50 \\
GD + GUARD 
& \textbf{18.71} & \textbf{59.30} & \textbf{86.16} 
& \textbf{10.49} & \textbf{-18.52} & \textbf{-23.77} 
& \textbf{-27.13} & \textbf{-13.95} & \textbf{-23.74} \\
\bottomrule
\end{tabular}
\end{table}

\subsection{Results for Phi-1.5B on the TOFU Dataset}
\label{subsec:apdx-exp-phi}
We present the results of Phi-1.5B in Tables~\ref{tab:phi-forget01}, \ref{tab:phi-forget05} and \ref{tab:phi-forget10}.

\begin{table}[ht]
\centering
\caption{{Sacrifice rate (\%) of different unlearning methods when unlearning Phi-1.5B on 1\% of TOFU data. Across all datasets (Retain Set, Real Author Set, and Real World Set), GUARD consistently reduces the sacrifice rate across various methods (GA, GD, KM, PO). The largest improvement is observed on the Retain Set, where GUARD lowers $\rho_{r}$ by 9.77\% over GA, $\rho_{p}$ by 16.33\% over PO, and $\rho_{t}$ by 16.35\% over KM. Negative values in the Real Author and Real World Sets suggest that GUARD enhances generalization, likely by reducing overfitting on the forget set.}}
\begin{tabular}{lccccccccc}
\specialrule{1pt}{0pt}{0pt}
& \multicolumn{3}{c}{\textbf{{Retain Set}}} & \multicolumn{3}{c}{\textbf{{Real Author Set}}} & \multicolumn{3}{c}{\textbf{{Real World Set}}} \\ \cline{2-10} 
\multirow{-2}{*}{\textbf{{Methods}}} & $\rho_{r}$ & $\rho_{p}$ & $\rho_{t}$ & $\rho_{r}$ & $\rho_{p}$ & $\rho_{t}$ & $\rho_{r}$ & $\rho_{p}$ & $\rho_{t}$ \\
\specialrule{1pt}{0pt}{0pt}
GA & 17.99 & 33.55 & 87.17 & -2.77 & 1.65 & 23.29 & 3.90 & -2.26 & 35.13 \\
GA+GUARD & \textbf{8.22} & \textbf{30.16} & \textbf{69.12} & \textbf{-9.18} & \textbf{1.21} & \textbf{9.78} & \textbf{-3.44} & \textbf{-3.03} & \textbf{28.27} \\ 
\midrule
GD & 9.16 & 15.95 & -0.50 & -0.83 & -4.37 & -0.50 & 0.81 & 3.63 & -1.57 \\
GD+GUARD & \textbf{8.71} & \textbf{13.11} & \textbf{-0.85} & \textbf{-3.44} & \textbf{-6.55} & \textbf{0.23} & \textbf{2.74} & \textbf{-2.47} & \textbf{-4.66} \\ 
\midrule
KM & 11.75 & 31.28 & 53.78 & -2.46 & 0.59 & 8.19 & 0.64 & -2.83 & -31.68 \\
KM+GUARD & \textbf{5.97} & \textbf{27.43} & \textbf{37.43} & \textbf{-8.39} & \textbf{-0.70} & \textbf{-0.38} & \textbf{-1.35} & \textbf{-3.57} & \textbf{-34.63} \\
\midrule
PO & 11.90 & 69.16 & 21.74 & -3.60 & 12.84 & 9.47 & -3.85 & -1.86 & -2.40 \\
PO+GUARD & \textbf{9.04} & \textbf{52.83} & \textbf{5.68} & \textbf{-7.50} & \textbf{3.95} & \textbf{3.67} & \textbf{-7.09} & \textbf{-7.35} & \textbf{-10.50} \\
\specialrule{1pt}{0pt}{0pt}
\end{tabular}

\label{tab:phi-forget01}
\end{table}

\begin{table}[ht]
\caption{{Sacrifice Rate of GUARD upon unlearning Phi-1.5B on 5\% of all datapoints in TOFU. When unlearning a larger portion (compared to the portion of 1\% in Table \ref{tab:phi-forget01}) of the dataset, GUARD generally induces a larger reduction on the sacrifice rate across all evaluation sets (Retain Set, Real Author Set, and Real World Set) and methods (GA, GD, KM, and PO). 
The largest improvement is observed on the Retain Set, where GUARD lowers $\rho_{r}$ by 23\% over KM, $\rho_{p}$ by 26\% over GA, and $\rho_{t}$ by 18.02\% over GA. 
Notably, in the Real World Set, PO+\textit{GUARD} leads to negative sacrifice rates across all evaluation metrics, suggesting that GUARD effectively mitigates unintended knowledge degradation and even improves model generalization.}}
\begin{tabular}{lccccccccc}
\specialrule{1pt}{0pt}{0pt}
 & \multicolumn{3}{c}{\textbf{Retain Set}} & \multicolumn{3}{c}{\textbf{Real Author Set}} & \multicolumn{3}{c}{\textbf{Real World Set}} \\ \cline{2-10} 
\multirow{-2}{*}{\textbf{Methods}} & $\rho_{r}$ & $\rho_{p}$ & $\rho_{t}$ & $\rho_{r}$ & $\rho_{p}$ & $\rho_{t}$ & $\rho_{r}$ & $\rho_{p}$ & $\rho_{t}$ \\ 
\specialrule{1pt}{0pt}{0pt}
GA & 97.96 & 92.47 & 101.97 & 40.90 & 19.97 & 20.52 & 74.41 & 20.53 & 21.10 \\
GA+GUARD & \textbf{79.68} & \textbf{73.74} & \textbf{83.95} & \textbf{27.73} & \textbf{13.22} & \textbf{15.16} & \textbf{53.10} & \textbf{14.05} & \textbf{16.10} \\ 
\midrule
GD & 60.45 & 40.92 & 67.58 & 33.37 & 11.70 & 19.55 & 25.70 & 3.28 & 5.48 \\
GD+GUARD & \textbf{55.43} & \textbf{35.30} & \textbf{63.50} & \textbf{30.32} & \textbf{4.82} & \textbf{8.79} & \textbf{18.78} & \textbf{-5.17} & \textbf{-9.44} \\ 
\midrule
KM & 97.96 & 97.55 & 99.72 & 40.90 & 17.33 & 17.81 & 74.41 & 15.56 & 15.99 \\
KM+GUARD & \textbf{74.65} & \textbf{72.35} & \textbf{97.08} & \textbf{28.73} & \textbf{13.00} & \textbf{17.56} & \textbf{58.10} & \textbf{10.79} & \textbf{14.58} \\ 
\midrule
PO & 43.41 & 74.85 & 173.78 & -8.85 & 8.58 & 20.06 & -8.32 & -3.01 & -7.02 \\
PO+GUARD & \textbf{35.84} & \textbf{57.50} & \textbf{161.89} & \textbf{-13.17} & \textbf{5.95} & \textbf{16.91} & \textbf{-16.08} & \textbf{-17.42} & \textbf{-16.76} \\ 
\specialrule{1pt}{0pt}{0pt}
\end{tabular}

\label{tab:phi-forget05}
\end{table}

\begin{table}[ht]
\caption{{Sacrifice Rate of GUARD upon unlearning Phi-1.5B on 10\% of all datapoints in TOFU. When unlearning a larger portion of the dataset, GUARD generally induces an even larger reduction on the sacrifice rate across all evaluation sets (Retain Set, Real Author Set, and Real World Set) and methods (GA, GD, KM, and PO). 
The largest improvement is observed on the Retain Set, where GUARD lowers $\rho_{r}$ by 23.7\% over PO, $\rho_{p}$ by 50.44\% over PO, and $\rho_{t}$ by 54.75\% over GA. }}
\begin{tabular}{lccccccccc}
\specialrule{1pt}{0pt}{0pt}
& \multicolumn{3}{c}{\textbf{Retain Set}}  & \multicolumn{3}{c}{\textbf{Real Author Set}}  & \multicolumn{3}{c}{\textbf{Real World Set}}  \\ \cline{2-10} 
\multirow{-2}{*}{\textbf{Methods}} & $\rho_{r}$ & $\rho_{p}$  & $\rho_{t}$ & $\rho_{r}$ & $\rho_{p}$  & $\rho_{t}$  & $\rho_{r}$ & $\rho_{p}$   & $\rho_{t}$   \\ \specialrule{1pt}{0pt}{0pt}
GA                                 & 98.76          & 94.45          & 258.34        & 41.31          & 17.55          & 60.23          & 75.15          & 27.16           & 160.42          \\
GA+GUARD                           & \textbf{80.88} & \textbf{86.43} & \textbf{203.59} & \textbf{21.73} & \textbf{13.72} & \textbf{49.99} & \textbf{44.10} & \textbf{14.44}  & \textbf{106.51} \\ \hline
GD                                 & 73.97          & 97.90          & 183.63        & 39.34          & 11.16          & 13.26          & 57.21          & -5.54           & 9.39            \\
GD+GUARD                           & \textbf{69.32} & \textbf{82.29} & \textbf{168.55} & \textbf{29.05} & \textbf{10.03} & \textbf{3.60}  & \textbf{53.54} & \textbf{-19.68} & \textbf{-17.97} \\ \hline
KM                                 & 98.05          & 95.24          & 250.46        & 40.94          & 16.92          & 60.97          & 74.48          & 15.19           & 117.87          \\
KM+GUARD                           & \textbf{75.14} & \textbf{79.84} & \textbf{197.48} & \textbf{17.75} & \textbf{10.93} & \textbf{26.22} & \textbf{72.04} & \textbf{11.28}  & \textbf{92.99} \\ \hline
PO                                 & 76.84          & 78.48          & 245.73         & -6.14          & 9.80           & 22.31          & 4.53           & -1.32           & 21.38           \\
PO+GUARD                           & \textbf{53.19} & \textbf{28.04} & \textbf{224.45} & \textbf{-9.96} & \textbf{8.44}  & \textbf{3.06}  & \textbf{-2.71} & \textbf{-3.41}  & \textbf{8.57}   \\ \specialrule{1pt}{0pt}{0pt}
\end{tabular}

\label{tab:phi-forget10}
\end{table}

\paragraph{Effectiveness of GUARD.} Across all datasets, methods, and splits, GUARD consistently reduces the sacrifice rate across key metrics ($\rho_{r}$, $\rho_{p}$, and $\rho_{t}$), demonstrating its ability to mitigate unintended knowledge degradation while effectively removing targeted data. Specifically:  

\begin{itemize}
    \item \textit{Forgetting 1\% of data:} The largest improvement occurs on the Retain Set, where GUARD reduces $\rho_{r}$ by 9.77\% (compared to GA), $\rho_{p}$ by 16.33\% (compared to PO), and $\rho_{t}$ by 16.35\% (compared to KM).  
    \item \textit{Forgetting 5\% of data:} The largest improvement is again on the Retain Set, where GUARD lowers $\rho_{r}$ by 23\% (compared to KM), $\rho_{p}$ by 26\% (compared to GA), and $\rho_{t}$ by 18.02\% (compared to GA).  
    \item \textit{Forgetting 10\% of data:} The most significant improvement is observed on the Retain Set, with reductions of $\rho_{r}$ by 23.7\% (compared to PO), $\rho_{p}$ by 50.44\% (compared to PO), and $\rho_{t}$ by 54.75\% (compared to GA).  
\end{itemize}  

\paragraph{Impact of Forgetting Percentage.} Larger unlearning proportions generally lead to higher sacrifice rates, aligning with the intuition that removing a greater volume of data negatively affects knowledge retention. However, as the forgetting proportion increases, GUARD also achieves more substantial reductions in sacrifice rates across all evaluation metrics, baselines, and datasets. For instance, in the case of $\rho_{r}$, the largest reductions by GUARD for forgetting 1\%, 5\%, and 10\% of the total datapoints are 16.35\%, 18.02\%, and 54.75\%, respectively.  

\paragraph{Dataset-Specific Observations.} Across all datasets, splits, and evaluation metrics, GUARD achieves the most significant improvements on the Retain Set, which aligns with its design: the proxy data attribution is computed based on the alignment between the forget data and the overall retain data, directly benefiting retention performance. In contrast, improvements on the generalization sets (Real World Set and Real Author Set) are indirect. Since unlearning the forget set inherently reduces overfitting, we observe negative sacrifice rates in the Real World Set and Real Author Set, which suggest that GUARD not only prevents unintended knowledge degradation but may also enhance model generalization.  

\subsection{Results for ICLM-7B on the MUSE BOOKS Dataset}
\label{subsec:apdx-exp-muse-books}
\begin{table}[htbp]
\centering
\caption{Unlearning experiments with ICLM-7B on the MUSE BOOKS dataset.}
\label{tab:unlearning_muse_book}
\resizebox{0.9\textwidth}{!}{
\begin{tabular}{lcccc}
\toprule

\multirow{3}{*}{\textbf{Methods}} & \textbf{Forget Set VerbMem} & \textbf{Forget Set KnowMem} & \textbf{PrivLeak} & \textbf{Retain Set KnowMem} \\
& \textbf{($\downarrow$)} & \textbf{($\downarrow$)} & \textbf{($\in [-5\%, 5\%]$)} & \textbf{($\uparrow$)} \\
\cmidrule(lr){2-5}

\multicolumn{5}{c}{\textbf{BOOKS}} \\
\midrule
w/o Unlearn & 99.8 & 59.4 & $-57.5$ & 66.9 \\

\midrule
GA & 0.0 & 0.0 & $\mathbf{-25.0}$ & 0.0 \\
GA & \textbf{0.0} & \textbf{0.0} & $-31.3$ & \textbf{10.3} \\
\midrule
KM & \textbf{15.4} & 22.5 & $\mathbf{-39.1}$ & 35.9 \\
KM + GUARD & 16.0 & \textbf{20.3} & $-42.2$ & \textbf{47.2} \\
\midrule
GD & 0.0 & 0.0 & $-29.1$ & 11.6 \\
GD + GUARD & \textbf{0.0} & \textbf{0.0} & $\mathbf{-26.6}$ & \textbf{28.1} \\

\midrule  
NPO & 0.0 & 0.0 & -32.3 & 24.5 \\
SimNPO & 0.0 & 0.0 & \textbf{-22.6} & 47.8 \\
FLAT (TV) & 0.0 & 0.0 & -45.2 & 28.9 \\
SimImP & 0.0 & 0.0 & -26.4 & 22.5 \\
NPO + GUARD & \textbf{0.0} & \textbf{0.0} & -38.16 & \textbf{52.71} \\
\bottomrule
\end{tabular}}
\end{table}

We present the results of ICLM-7B on MUSE BOOKS dataset in Table \ref{tab:unlearning_muse_book}.

\subsection{Computational Efficiency Analysis}
\label{subsec:time}
\begin{table}[h]
\centering
\caption{Runtime of GUARD Unlearning for Single Epoch}
\label{tab:runtime_estimates}
\resizebox{0.9\textwidth}{!}{
\begin{tabular}{lcccc}
\toprule
\textbf{Dataset} & \textbf{Model} & \textbf{Forget Proportion/Set} & \textbf{Runtime of GA (min)} & \textbf{Runtime of GA + GUARD (min)}\\
\midrule
\multirow{6}{*}{TOFU} & \multirow{3}{*}{LLaMA-2-7B} & 1\% & 4.2 & 5.1\\
& & 5\%  & 14.4 & 16.6\\
& & 10\% & 32.3 & 35.5\\
\cmidrule{2-5}
& \multirow{3}{*}{Phi-1.5B} & 1\%  & 2.8 & 3.0\\
& & 5\%  & 6.1 & 6.8\\
& & 10\%  & 14.5 & 15.6\\
\midrule
\multirow{2}{*}{MUSE} & LLaMA-2-7B & NEWS & 83.7 & 91.2\\
& ICLM-7B & BOOKS  & 116.0 & 127.9\\
\bottomrule
\end{tabular}}
\end{table}

To evaluate the computational overhead introduced by GUARD, we conducted runtime comparisons between standard Gradient Ascent (GA) and GA augmented with GUARD across different datasets and model architectures. Table~\ref{tab:runtime_estimates} presents the runtime measurements for single epoch unlearning experiments conducted on our experimental setup with 4 NVIDIA Tesla V100-SXM2 GPUs.

The runtime comparison between GA and GA + GUARD provides a representative assessment of GUARD's computational overhead. Importantly, the additional time introduced by GUARD primarily stems from the data attribution computation phase, which is independent of the choice of baseline unlearning method. This means that the overhead observed with GA would be similarly applicable when GUARD is integrated with other baseline methods.

A critical aspect of GUARD's computational efficiency lies in the nature of its data attribution computation. The attribution score calculation described in Eq.~\ref{eq:def-J-prod} serves as a preprocessing step that is performed only once for each model parameters-dataset combination. Crucially, this computation is independent of the number of unlearning epochs and the frequency of unlearning operations. Once the attribution scores are computed for a given forget set, they can be reused across multiple unlearning sessions or when experimenting with different hyperparameters, amortizing the computational cost over multiple uses.

In conclusion, GUARD demonstrates practical computational efficiency with minimal time overhead. The one-time nature of the attribution computation, combined with the modest runtime increases observed across diverse experimental settings, establishes GUARD as a computationally viable enhancement to existing unlearning methods without imposing prohibitive computational costs.

\subsection{Proxy for data attribution}
\label{subsec:apdx-exp-da}

\begin{figure}[ht]
    \centering
    \includegraphics[width=0.5\linewidth]{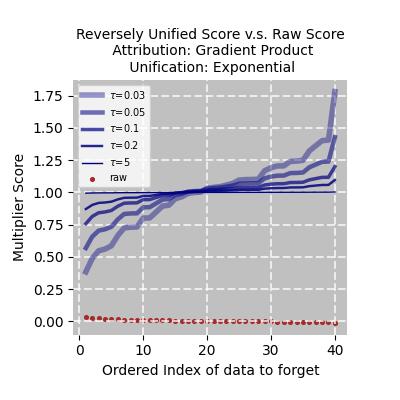}
    \caption{Raw Score and reversely unified score given by different attribution and unification methods.}
    \label{fig:score}
\end{figure}


For the unification methods, the reversely unified scores controlled by temperature are plotted in the right two plots of Fig. \ref{fig:score}. Exponential unification provides a smoother attribution distribution at a lower temperature.

\subsection{Hyperparameter Sensitivity}
\label{subsec:apdx-exp-hs}
We present the unlearning effect of GUARD in Fig. \ref{fig:phi-rouge}, where we use the example of unlearning Phi-1.5B on TOFU (Forget Split = 1\%).
In Fig. \ref{fig:phi-rouge}, the x-axis use $-\log(\tau)$ to depict various \textit{$\tau$ for unification}. Fig. \ref{fig:phi-rouge} shows the performance across three evaluation metrics: ROUGE-1, ROUGE-1 drop $\Delta_{\text{ROUGE-1}}$, and Sacrifice Rate $\rho^{ul}$. The results demonstrate that our method significantly outperforms the baseline in maintaining the desired knowledge.

\begin{figure}[t]
    \centering
    \includegraphics[width=1\linewidth]{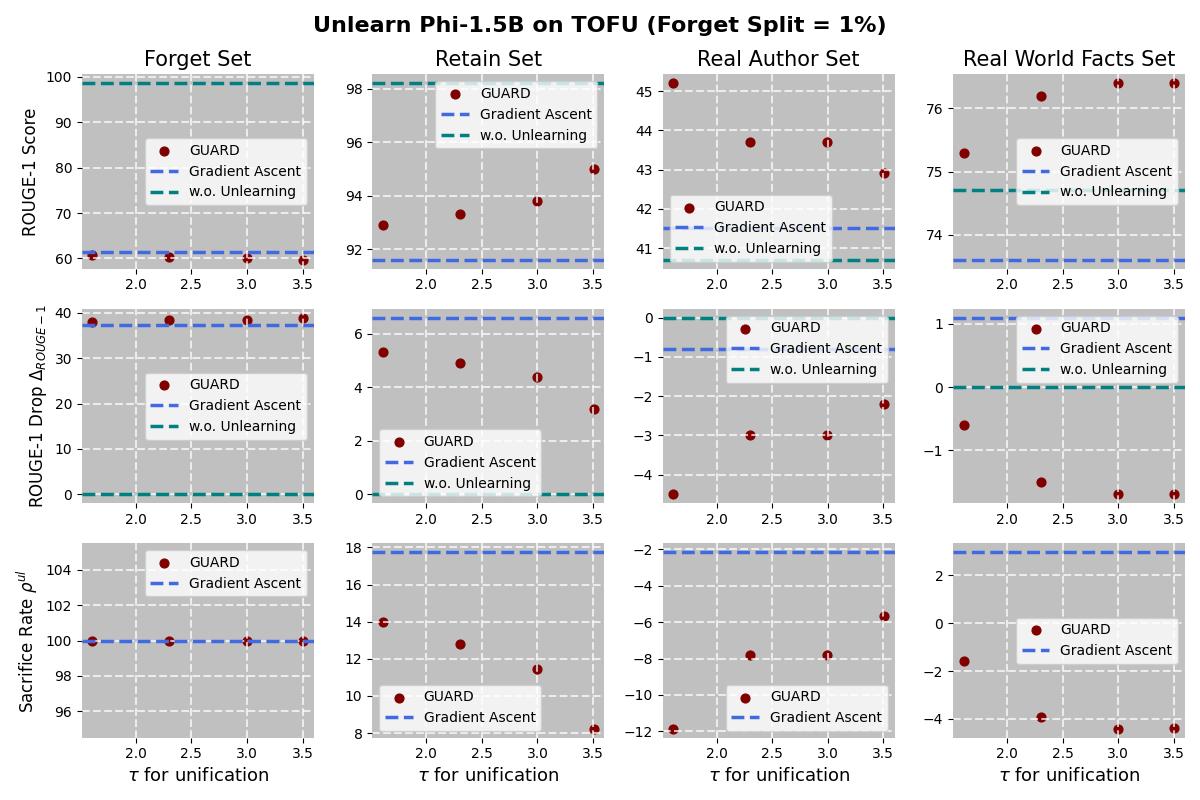}
    \caption{Hyperparameter Sensitivity Analysis.}
    \label{fig:phi-rouge}
\end{figure}



\end{document}